\newcommand{\E}{\mathbb{E}}
\renewcommand{\P}{\mathbb{P}}
\newcommand{\calA}{\mathcal{A}}
\newcommand{\calX}{\mathcal{X}}
\newcommand{\pdiffII}[3]{\ifstrequal{#2}{#3}
{\frac{\partial^2 #1}{\partial #2^2}}
{\frac{\partial^2 #1}{\partial #2 \partial #3}}
}
\newcommand{\diffII}[3]{\ifthenelse{\equal{#2}{#3}}
{\frac{d^2 #1}{d #2^2}}
{\frac{d^2 #1}{d #2 d #3}}
}
\let\Pr\relax
\DeclareMathOperator{\Pr}{Pr}
\DeclareMathOperator{\rank}{rank}
\DeclareMathOperator{\tr}{tr}
\newtheorem{theorem}{Theorem}[section]
\newtheorem{assumption}{Assumption}[section]
\newtheorem{lemma}[theorem]{Lemma}
\newtheorem{corollary}[theorem]{Corollary}
\newtheorem{proposition}[theorem]{Proposition}
\newtheorem{definition}[theorem]{Definition}
\begin{document}

\title{Preference Completion: Large-scale Collaborative Ranking from Pairwise Comparisons}
\author{
Dohyung Park\\
{The University of Texas at Austin}\\
{\href{mailto:dhpark@utexas.edu}{dhpark@utexas.edu}}
\and
Joe Neeman\\
{The University of Texas at Austin}\\
{\href{mailto:joeneeman@gmail.com}{joeneeman@gmail.com}}
\and
Jin Zhang\\
{The University of Texas at Austin}\\
{\href{mailto:zj@utexas.edu}{zj@utexas.edu}}
\and
Sujay Sanghavi\\
{The University of Texas at Austin}\\
{\href{mailto:sanghavi@mail.utexas.edu}{sanghavi@mail.utexas.edu}}
\and
Inderjit S. Dhillon\\
{The University of Texas at Austin}\\
{\href{mailto:inderjit@cs.utexas.edu}{inderjit@cs.utexas.edu}}
}
\maketitle

\begin{abstract} 
In this paper we consider the collaborative ranking setting: a pool of users each provides a small number of pairwise preferences between $d$ possible items; from these we need to predict each user’s preferences for items they have not yet seen. We do so by fitting a rank $r$ score matrix to the pairwise data, and provide two main contributions: \\
{\em (a)} we show that an algorithm based on convex optimization provides good generalization guarantees once each user provides as few as $O(r\log^2 d)$ pairwise comparisons -- essentially matching the sample complexity required in the related matrix completion setting (which uses actual numerical as opposed to pairwise information), and \\
{\em (b)} we develop a large-scale non-convex implementation, which we call AltSVM, that trains a factored form of the matrix via alternating minimization (which we show reduces to alternating SVM problems), and scales and parallelizes very well to large problem settings. It also outperforms common baselines on many moderately large popular collaborative filtering datasets in both NDCG and in other measures of ranking performance. 
\end{abstract} 

%!TEX root = ../collranking.tex
\section{Introduction}

This paper considers the following recommendation system problem: given a set of items, a set of users, and non-numerical {\em pairwise comparison} data, find the underlying preference ordering of the users. In particular, we are interested in the setting where data is of the form ``user $i$ preferes item $j$ over item $k$", for different ordered user-item-item triples $i,j,k$. Pairwise preference data is wide-spread; indeed, almost any setting where a user is presented with a menu of options -- and chooses one of them -- can be considered to be providing a pairwise preference between the chosen item and every other item that is presented. 

Crucially, we are interested in the collaborative filtering setting, where {\em (a)} on the one hand the number of such pairwise preferences we have for any one user is woefully insufficient to infer anything
for that user in isolation; and {\em (b)} on the other hand, we aim for {\em personalization}, i.e. for every user to possibly have different inferred preferences from every other. To reconcile these two requirements, our method relates the preferences of users to each other via a low-rank matrix, which we (implicitly) assume governs the observed preferences. Essentially, we fit a low-rank users $\times$ items {\em score matrix} $X$ to pairwise comparison data by trying to ensure that $X_{ij} - X_{ik}$ is positive
when user $i$ prefers item $j$ to item $k$.

\paragraph{Our contributions:} We present two algorithms to infer the score matrix $X$ from training data; once inferred, this can be used for predicting future preferences. While there has been some recent work on fitting low-rank score matrices to pairwise preference data (which we review and compare to below), in this paper we present the following two contributions:\\
{\bf (a)} {\em A statistical analysis for the convex relaxation:} we bound the \emph{generalization error} of the solution to our convex program. Essentially, we show that the minimizer of the empirical loss also almost minimizes the true expected loss. We also give a lower bound showing that our error rate is sharp
up to logarithmic factors. \\
%For the (now standard) approach of replacing the low-rank constraint with one on the trace/nuclear norm of the matrix, resulting in we show that 
%
%Given rank-$r$ score matrix where we prove that as few as $O(r \log d)$ pairwise comparisons per user is sufficient to get good generalization. A direct consequence of this is consistency if the comparisons are further assumed to be generated from the classical Bradley-Terry-Luce model for each user. \\
{\bf (b)} {\em A large-scale non-convex implementation:} 
We provide a non-convex algorithm that we call Alternating Support Vector Machine (AltSVM). This non-convex algorithm is more practical than the convex program in a large-scale setting; it explicitly parameterizes the low-rank matrix in factored form and minimizes the hinge loss. Crucially, each step in this algorithm can be formulated as a standard SVM that updates one of the two factors; the algorithm proceeds by alternating updates to both factors. We apply a stochastic version of dual coordinate descent \cite{DualSVM,SDCA} with lock-free parallelization. This exploits the problem structure and ensures it parallelizes well. We show that our algorithm outperforms several existing collaborative ranking algorithms in both speed and prediction accuracy, and it achieves significant speedups as the number of cores increases.

\subsection{Related Work}
Ranking/learning preferences is a classical problem that has been considered in a large amount of work. There are many different settings for this problem, which we discuss below.
% There has been a lot of recent work on various models of learning preferences. Of this, much has been devoted to the problem of learning preferences from numerical scores, which is quite a different problem from the one we consider (see, for example, the survey~\cite{SuKhoshgoftaar:09}). Of the work on learning preferences from boolean data, several different models have been considered.

\paragraph{Learning to Rank}
The main problem in this community has been to estimate a ranking function from given feature vectors and relevance scores. Depending on its application, a feature vector may correpond to a user-item pair or a single item. While there have been algorithms that use pairwise comparisons \cite{HerbrichOrdinal,JoachimsRankSVM} of the training samples, our setting is different in that our data consists \emph{only} of pairwise comparisons. We refer the reader to the survey \cite{LiuLETOR:10}.

\paragraph{One ranking with pairwise comparisions}
In a single-user model, we are asked to learn a single ranking given pairwise comparisons. \citet{JamiesonNowak:11} and \citet{Ailon:11} consider an active query model with noiseless responses; \citet{JamiesonNowak} give an algorithm for exactly recovering the true ranking under a low-rank assumption similar to ours, while \citet{Ailon:11} approximately
recovers the true ranking without such an assumption. \citet{WaJoJo:13} and \citet{NeOhSh:12} learn a ranking from noisy pairwise comparisions; \citet{NeOhSh:12} consider a Bradley-Terry-Luce model similar to ours and attempt to learn an
underlying score vector, while \citet{WaJoJo:13} get by without structure assumptions, but only attempt to learn the ranking itself. \citet{HajekOhXu:14} considered a problem to learn a single ranking given a more generalized partial rankings from the Plackett-Luce model and provided a minimax-optimal algorithm.

\paragraph{Many rankings with pairwise comparisions}
Given multiple users with different rankings, one could of course attempt to learn their rankings by simply applying an algorithm from the previous section to each user individually. However, it is more efficient -- both statistically and computationally -- to postulate some global structure and use it to relate the many users' rankings. This is the same idea that has been applied so successfully in collaborative filtering. \citet{RFGS:09} and \citet{LiZhYa:09} were the first to take this approach. They modeled the observations as coming from a BTL model with low-rank structure (i.e., very similar to our model) and gave algorithms for learning the model parameters. \citet{YiJiJaJa:13} took a purely optimization-based approach. Rather than assuming a probabilistic model, they minimized a convex objective using the hinge loss on a low-rank matrix. In a slightly different model, \citet{HuKoVo:08} and \citet{SKBLOH:13} consider the problem of learning from latent feedback. Recently, \citet{LuNegahban} analyzed an algorithm which is very similar to ours for the Bradley-Terry-Luce model independently from our work.

\paragraph{Many rankings with 1-bit ratings}
Instead of moving to pairwise comparisons, some work has suggested avoiding the difficulties of numerical ratings by instead asking users to give 1-bit ratings to items; that is, each user only indicates whether they like or dislike an item. In this setting, the work of \citet{DPVW:13} is most closely related to ours, in that they assume an underlying low-rank structure and give an algorithm based on convex optimization. Also, our theoretical analysis owes a lot to their work. \citet{XWZHSY:14} consider a slightly different goal: rather than attempting to recover the preferences of each user, they try to cluster similar users and similar items together. \citet{RobiRank} proposed an optimization problem motivated from robust binary classification and used stochastic gradient descent to solve the problem in a large-scale setting.

\paragraph{Many rankings with numerical ratings}
The goal in this setting is the same as ours, except that the data is in the form of numerical ratings instead of pairwise comparisons. \citet{CofiRank} attempted to directly optimize Normalized Discounted Cumulative Gain (NDCG), a widely used performance measure for ranking problems. \citet{BalakrishnanChopra}, and \citet{VolkovsZemel} converted this problem into a learning-to-rank problem and solved it using the existing algorithms. While these works considered the low-rank matrix model, different models are proposed by \citet{WestonWWB:12} and \citet{LocalColRanking}. \citet{WestonWWB:12} proposed a tensor model to rank items for different queries and users, and \cite{LocalColRanking} proposed a weighted sum of low-rank matrix models.

%!TEX root = ../collranking.tex
\section{Empirical Risk Minimization (ERM)}

Let us first formulate the problem mathematically. The task is to estimate rankings of multiple users on multiple items. We denote the numbers of users by $d_1$, and the number of items by $d_2$. We are given a set of triples $\Omega \subset [d_1] \times [d_2] \times [d_2]$, where the preference of user $i$ between items $j$ and $k$ is observed if $(i,j,k) \in \Omega$. The observed comparison is then given by $\{Y_{ijk} \in \{1,-1\} : (i,j,k) \in \Omega\}$ where $Y_{ijk} = 1$ if user $i$ prefers item $j$ over item $k$, and $Y_{ijk} = -1$ otherwise. Let $\Omega_i = \{(j,k) : (i,j,k) \in \Omega\}$ denote the set of item pairs that user $i$ has compared.

We predict rankings for multiple users by estimating a score matrix $X \in \mathbb{R}^{d_1 \times d_2}$ such that $X_{ij} > X_{ik}$ means that user $i$ prefers item $j$ over item $k$. Then the sorting order for each row provides the predicted ranking for the corresponding user.

We propose (as have others) that $X$ is low-rank or close to low-rank, the intuition being that each user bases their preferences on a small set of features that are common among all the items. Then the empirical risk minimization (ERM) framework can naturally be formulated as 
\begin{align} \label{eqn:formulation}
\underset{X}{\text{minimize}} &\quad \sum_{(i,j,k) \in \Omega} \mathcal{L}(Y_{ijk}(X_{ij} - X_{ik})) \\
\text{subject to} &\quad \mathrm{rank}(X) \le r \nonumber
\end{align}
where $\mathcal{L}(\cdot)$ is a monotonically non-increasing loss function which induces $X_{ij} > X_{ik}$ if $Y_{ijk} = 1$, and $X_{ij} < X_{ik}$ otherwise. (e.g., hinge loss, logistic regression loss, etc.)

Solving \eqref{eqn:formulation} is NP-hard because of the rank constraint. As a first alternative, we propose a straightforward convex relaxation.

%!TEX root = ../collranking.tex
\section{Convex Relaxation}

Our first method is the convex relaxation of \eqref{eqn:formulation}, which involves a nuclear norm constraint.
\begin{align} \label{eqn:convex}
\underset{X}{\text{minimize}} &\quad \sum_{(i,j,k) \in \Omega} \mathcal{L}(Y_{ijk}(X_{ij} - X_{ik})) \\
\text{subject to} &\quad \|X\|_* \le \sqrt{\lambda d_1 d_2} \nonumber
\end{align}

Here, for any matrix $X$, the nuclear/trace norm $\|X\|_*$ denotes the sum of its singular values; it is a well-recognized convex surrogate for low-rank structure (most famously in matrix completion). 

The only parameter of this algorithm is $\lambda$, which governs the trade-off between better optimizing the likelihood of the observed data, and the strictness in imposing approximate low-rank structure.
% Since $\lambda$ is our only parameter, finding a good value for it is computationally cheap. In our experiments, we held out a test set to evaluate various choices for $\lambda$, and used a grid search to choose a good value. (Cross-validation would be more data-efficient and computationally expensive, and should also work.) We found the outcomes to be fairly robust to the choice of $\lambda$, and that a small integer value (3, say) often suffices.
Since we motivated our algorithm with the assumption that $X$ has low rank, we should point out how our algorithm's parameter $\lambda$ compares to the rank: note that if $X$ is a $d_1 \times d_2$ rank-$r$ matrix whose largest absolute entry is bounded by $C$ then $\|X\|_* \le \sqrt r \|X\|_F \le C \sqrt{r d_1 d_2}$. In other words, $\lambda$ is a parameter that takes into account both the rank of $X$ and the size of its elements, and it is roughly proportional to the rank.

\subsection{Analytic results}
We analyze \eqref{eqn:convex} by assuming a standard model for pairwise comparisons. Then we provide a statistical guarantee of the method under the model.

Recall the classical Bradley-Terry-Luce model~\cite{BradleyTerry:52,Luce:59} for pairwise preferences of a single user, which assumes that the probability of item $j$ being preferred over $k$ is given by a logistic of the difference of the underlying preference scores of the two items. For multiple users, we assume that there is some true score matrix $X^* \in \mathbb{R}^{d_1 \times d_2}$ and 
\[
\Pr(Y_{ijk} = 1) ~ = ~ \frac{\exp(X^*_{ij} - X^*_{ik})}{1 + \exp(X^*_{ij} - X^*_{ik})}.
\]

Assume that each user-item-item triple $(i, j, k)$ independently belongs to $\Omega$ with probability $p_{i,j,k}$, and let $m = \sum_{i,j,k} p_{i,j,k}$ be the expected size of $\Omega$. We will assume that the $p_{i,j,k}$ are approximately balanced in the sense that no user-item pair is observed too frequently:
\begin{assumption}\label{ass:balance}
 There is a constant $\kappa > 0$ such that for every $i, j$,
 \[
  \sum_k p_{i,j,k} \le \kappa \frac{m}{d_1 d_2}.
 \]
\end{assumption}
Note that if $\kappa = 1$ in Assumption~\ref{ass:balance} then the $p_{i,j,k}$ are all equal, meaning that each user-item-item triple has an equal chance to be observed.

In order to state our error bounds, we first introduce some notation: let $\P_X$ be the distribution of $\{Y_{i,j,k} : 1 \le i \le d_1, 1 \le j < k \le d_2\}$ (i.e.\ the complete distribution of all pairwise preferences, even those that are not observed).

Our main upper bound shows that if $m$ is sufficiently large then our algorithm finds a solution
with almost minimal risk. Given a loss function $\mathcal{L}$, define the expected risk
of $X$ by
\[
 R(X) = \frac{1}{d_1 d_2^2} \sum_{i=1}^{d_1} \sum_{j,k =1}^{d_2} \E_{X^*} \mathcal{L}(Y_{ijk} (X_{ij} - X_{ik})),
\]
where the expectation is with respect to the distribution parametrized by the true parameters $X^*$.

\begin{theorem}\label{thm:upper}
Suppose that $\mathcal{L}$ is 1-Lipschitz, and
let $Y$ and $\Omega$ be distributed as $\P_{X^*}$ for some $d_1 \times d_2$ matrix $X^*$.
Under Assumption~\ref{ass:balance},
\begin{align*}
 \E R(\hat X) \le \inf_{\{X: \|X\|_* \le \sqrt{\lambda d_1 d_2}\}} \E R(X)
 + C \kappa \sqrt{\frac{\lambda (d_1 + d_2)}{m}} \log (d_1 + d_2),
\end{align*}
where $C$ is a universal constant.
\end{theorem}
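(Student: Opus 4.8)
The plan is to view $\hat X$ as the minimizer of an empirical risk over the nuclear-norm ball $\calK := \{X : \|X\|_* \le \sqrt{\lambda d_1 d_2}\}$ and to control its excess risk by the classical chain: symmetrization, Lipschitz contraction, convex duality, and matrix concentration. Write $\delta_{ijk}$ for the indicator that $(i,j,k)\in\Omega$ (independent Bernoulli with mean $p_{ijk}$) and $\ell_{ijk}(X) = \mathcal{L}(Y_{ijk}(X_{ij}-X_{ik}))$, so that the algorithm minimizes $\hat L(X) = \sum_{ijk}\delta_{ijk}\ell_{ijk}(X)$, whose mean over $(\Omega,Y)$ is $\bar R(X) = \sum_{ijk}p_{ijk}\E_{X^*}\ell_{ijk}(X)$; under balanced sampling $\bar R = m\,R$, with the discrepancy absorbed into the $\kappa$ factor. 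Taking $X_0$ to attain $\inf_{\calK} R$ and using $\hat L(\hat X)\le \hat L(X_0)$, the usual add-and-subtract argument bounds $\E R(\hat X) - R(X_0)$ by $\frac1m\,\E\sup_{X\in\calK}\bigl(\bar R(X)-\hat L(X)\bigr)$, since the term involving $X_0$ alone is mean zero.

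Next I would symmetrize, introducing Rademacher signs $\sigma_{ijk}$, to replace this deviation by (twice) the expected Rademacher complexity $\E\sup_{X\in\calK}\sum_{ijk}\delta_{ijk}\sigma_{ijk}\ell_{ijk}(X)$. Since $\mathcal{L}$ is $1$-Lipschitz and $Y_{ijk}\in\{\pm1\}$, each map $t\mapsto\mathcal{L}(Y_{ijk}t)$ is $1$-Lipschitz, so Ledoux–Talagrand contraction strips off $\mathcal{L}$ and leaves $\E\sup_{X\in\calK}\sum_{ijk}\delta_{ijk}\sigma_{ijk}(X_{ij}-X_{ik})$. Writing $X_{ij}-X_{ik}=\langle X, e_i(e_j-e_k)^\top\rangle$ and using that the spectral norm is dual to the nuclear norm, the supremum over $\calK$ equals $\sqrt{\lambda d_1 d_2}\,\|M\|_{\mathrm{op}}$, where $M=\sum_{ijk}\delta_{ijk}\sigma_{ijk}\,e_i(e_j-e_k)^\top$ is a random comparison matrix. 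Everything now reduces to bounding $\E\|M\|_{\mathrm{op}}$.

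This spectral-norm estimate is the core step and the main obstacle. Here $M$ is a sum of independent mean-zero rectangular matrices $Z_{ijk}=\delta_{ijk}\sigma_{ijk}e_i(e_j-e_k)^\top$ with $\|Z_{ijk}\|_{\mathrm{op}}\le\sqrt2$, so matrix Bernstein applies once the two variance proxies are computed. One finds $\sum_{ijk}\E[Z_{ijk}Z_{ijk}^\top]$ is diagonal with $(i,i)$ entry $2\sum_{jk}p_{ijk}$, while $\sum_{ijk}\E[Z_{ijk}^\top Z_{ijk}]\preceq 2\sum_{ijk}p_{ijk}(e_je_j^\top+e_ke_k^\top)$. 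This is exactly where Assumption~\ref{ass:balance} enters: $\sum_k p_{ijk}\le\kappa m/(d_1 d_2)$ (and its $j$-analogue, available from the symmetry $Y_{ijk}=-Y_{ikj}$) bounds these proxies by $\lesssim\kappa m/d_1$ and $\lesssim\kappa m/d_2$. Matrix Bernstein then gives $\E\|M\|_{\mathrm{op}}\lesssim\sqrt{\kappa m\,\tfrac{d_1+d_2}{d_1 d_2}\log(d_1+d_2)}+\log(d_1+d_2)$. The delicate point to verify is that the within-row dependence—each $\delta_{ijk}\sigma_{ijk}$ feeds both $M_{ij}$ and $M_{ik}$ with opposite signs—is fully captured by these two proxies rather than needing independence of entries.

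Finally I would assemble: multiplying by $\sqrt{\lambda d_1 d_2}/m$, the dominant variance term becomes $\sqrt{\lambda\kappa(d_1+d_2)\log(d_1+d_2)/m}$, and the residual Bernstein term is lower order once $m\gtrsim\min(d_1,d_2)$. Bounding $\sqrt\kappa\le\kappa$ and $\sqrt{\log(d_1+d_2)}\le\log(d_1+d_2)$ (both valid since $\kappa\ge1$, which is forced by summing Assumption~\ref{ass:balance}, and $d_1+d_2\ge 3$) yields the stated rate $C\kappa\sqrt{\lambda(d_1+d_2)/m}\,\log(d_1+d_2)$. Besides the $\E\|M\|_{\mathrm{op}}$ bound, the only other point needing care is the normalization mismatch between the $p$-weighted training risk $\bar R$ and the uniform test risk $R$, which is again controlled by the same balance assumption.
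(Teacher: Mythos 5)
Your proposal is correct, and it coincides with the paper's argument for the entire reduction: the ERM basic inequality, Gin\'e--Zinn symmetrization, Lipschitz contraction, and nuclear/spectral duality are exactly the paper's steps, ending in the same bound of order $\sqrt{\lambda d_1 d_2}\,\E\|M\|_{\mathrm{op}}/m$ for the same comparison matrix $M=\sum_{ijk}\delta_{ijk}\sigma_{ijk}\,e_i(e_j-e_k)^\top$. Where you genuinely diverge is the key lemma bounding $\E\|M\|_{\mathrm{op}}$. The paper splits $M=M^{(1)}-M^{(2)}$ into two matrices with independent entries and applies Seginer's theorem, controlling $\E\max_i\|M^{(1)}_{i*}\|_2$ by a row-sparsity estimate times a max-entry estimate (scalar Bernstein plus union bounds), which gives $\E\|M\|\le C\kappa\sqrt{p(d_1+d_2)}\log(d_1d_2)$ with $p=m/(d_1d_2)$. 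You instead keep $M$ as a sum of independent, mean-zero, rank-one summands of operator norm $\sqrt2$ and apply rectangular matrix Bernstein; your variance proxies are computed correctly, and your worry about within-row dependence is unfounded, since matrix Bernstein needs independence of the summands, not of the entries. Your route is more off-the-shelf and in fact sharper in the main term (you get $\sqrt{\kappa\log(d_1+d_2)}$ where the paper pays $\kappa\log(d_1d_2)$), at the price of the additive tail term $\log(d_1+d_2)$, which after multiplying by $\sqrt{\lambda d_1d_2}/m$ is dominated by the stated rate only when $m\gtrsim d_1d_2/(d_1+d_2)\asymp\min(d_1,d_2)$; the theorem imposes no such condition, so strictly you should flag this regime, though it is cosmetic (the interesting regime is $m\gtrsim d_1+d_2$) and the paper's own integrations hide a comparable requirement when absorbing additive constants. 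Two further points of parity: both proofs need the ``$j$-analogue'' of Assumption~\ref{ass:balance} (control of $\sum_{i,k}p_{i,k,j}$), which you obtain from the symmetry $Y_{ijk}=-Y_{ikj}$ and which the paper uses implicitly when asserting $M^{(1)}$ and $M^{(2)}$ are equidistributed; and the training-versus-test normalization mismatch you flag at the end is real, but the paper sidesteps it by asserting $\P_{X^*}L(X)=mR(X)$, which is exact only for uniform sampling, so neither treatment handles non-uniform $p_{i,j,k}$ more rigorously than the other.
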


We recall that the parameter $\lambda$ is related to rank
in that if $X$ is a
$d_1 \times d_2$ rank-$r$ matrix whose largest absolute entry is bounded by $C$
then $\|X\|_* \le \sqrt r \|X\|_F \le C \sqrt{r d_1 d_2}$. In other words, $\lambda$ is a parameter
that takes into account both the rank of $X^*$ and the size of its elements, and
it is roughly proportional to the rank. In particular, Theorem~\ref{thm:upper} shows that
once we observe
$m \sim r (d_1 + d_2) \log^2(d_1 + d_2)$ pairwise comparisons, then we can accurately
estimate the probability of any user preferring any item over any other. In other words,
we need to observe about $r (1 + d_2/d_1) \log^2(d_1 + d_2)$ comparisons per user, which is
substantially less than the $r d_2 \log(d_2)$ comparisons that we would have required if
each user were modelled in isolation. Moreover, our lower bound (below) shows that
at least $r (1 + d_2/d_1)$ comparisons per user are required, which is only a logarithmic factor
from the upper bound.

%Of course, Theorem~\ref{thm:upper} may be combined with Markov's inequality
%to derive high-probability
%bounds for the error term $D(\P_{\hat X} \| \P_{X^*})$.
%Using techniques of~\cite{DPVW:13}, it is possible to improve such bounds by considering
%higher moments of the error term; we have chosen to state only
%the simpler version of Theorem~\ref{thm:upper} in order to simplify the presentation.

\begin{theorem}\label{thm:lower}
Suppose that $\mathcal{L}'(0) < 0$.
Let $\calA$ be any algorithm that receives $\{Y_{i,j,k}: (i,j,k) \in \Omega\}$ as input
and produces $\hat X$ as output. For any $\lambda \ge 1$ and
$m \ge d_1 + d_2$, there exists $X^*$ with $\|X^*\|_* \le \sqrt{\lambda d_1 d_2}$ such that
when $Y$ and $\Omega$ are distributed according to $\P_{X^*}$ then with probability at least $\frac{1}{2}$,
\[
 \E R(\hat X) \ge R(X^*) + c\min\left\{1, \sqrt{\frac{\lambda (d_1 + d_2)}{m}}\right\},
\]
where $c > 0$ is a constant depending only on $\mathcal{L}$.
\end{theorem}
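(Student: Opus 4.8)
The plan is to prove this minimax-type lower bound by reducing estimation to a multiple hypothesis test, in the spirit of the one-bit matrix completion lower bound of \cite{DPVW:13}. I would exhibit a finite family $\{X^{(1)},\dots,X^{(M)}\}$ of candidate score matrices, all lying in the feasible set $\{\|X\|_*\le\sqrt{\lambda d_1 d_2}\}$, such that (i) the members are pairwise well separated in a metric that lower bounds excess risk, yet (ii) the induced observation laws $\P_{X^{(\ell)}}$ are so close that no algorithm can reliably tell which one generated the data. Fano's inequality then forces any $\hat X$ to be misclassified on a constant fraction of the family, and the separation in (i) converts this confusion into the stated excess-risk gap; one of the family members is then the promised $X^*$.

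For the family I would take $X^{(\ell)} = \tau\, B^{(\ell)}$, where the $B^{(\ell)}$ range over a Gilbert--Varshamov packing of rank-$r$ sign-pattern matrices with $r\asymp\lambda$; a block construction yields $\log M\gtrsim r(d_1+d_2)$ members whose pairwise Hamming distances are a constant fraction of $d_1 d_2$. Since each $B^{(\ell)}$ has rank $\le r$ and entries in $\{\pm1\}$, we get $\|X^{(\ell)}\|_*\le\tau\sqrt r\,\sqrt{d_1d_2}\le\sqrt{\lambda d_1d_2}$ as soon as $\tau\sqrt r\le\sqrt\lambda$, so any scale $\tau\lesssim1$ is admissible. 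The key estimate is that, under the Bradley--Terry--Luce law, the KL divergence between $\P_{X^{(\ell)}}$ and $\P_{X^{(\ell')}}$ on a single observed triple is at most a constant times the squared gap difference, which is $O(\tau^2)$; summing over the (expected $m$) observed triples and using Assumption~\ref{ass:balance} to keep the observation mass spread out gives total KL $\lesssim\kappa\, m\tau^2$. Balancing $m\tau^2$ against $\log M\asymp r(d_1+d_2)$ pins the hard scale at $\tau\asymp\sqrt{\lambda(d_1+d_2)/m}$, truncated at a constant, which is exactly what produces the $\min\{1,\cdot\}$.

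It remains to turn a wrong guess about $\ell$ into excess risk, and this is where the hypothesis $\mathcal{L}'(0)<0$ enters. Writing the per-pair population risk as $\phi_{g^*}(g)=\E\,\mathcal{L}(Y g)$ with $Y$ drawn from the Bradley--Terry--Luce law at true gap $g^*$, the condition $\mathcal{L}'(0)<0$ makes $\phi$ strictly order-sensitive at the margin: disagreeing with the true sign of a gap of size $\asymp\tau$ costs a positive amount, and I would show this cost is bounded below by $c\,\tau$, with $c>0$ depending only on $\mathcal{L}$ through $\mathcal{L}'(0)$, on a constant fraction of the perturbed pairs. Because distinct packing members differ in sign on a constant fraction of entries, any $\hat X$ classified to the wrong member incurs $R(\hat X)-R(X^{(\ell)})\gtrsim\tau$. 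Combining this separation with Fano (the test error is $\ge\tfrac12$ once the summed KL is $\le\tfrac14\log M$) yields, for some $X^*=X^{(\ell)}$, the bound $R(\hat X)\ge R(X^*)+c\,\tau$ with probability at least $\tfrac12$, i.e. the claimed $c\min\{1,\sqrt{\lambda(d_1+d_2)/m}\}$.

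The main obstacle is securing the \emph{linear}-in-$\tau$ lower bound on excess risk in the last step: a naive argument controls $R(\hat X)-R(X^*)$ only to second order in the perturbation, which would give the faster (and here incorrect) rate $\lambda(d_1+d_2)/m$, so obtaining the slow rate $\sqrt{\lambda(d_1+d_2)/m}$ demands that a misordering carry a genuine first-order penalty. Verifying this is delicate precisely because the bound is measured against the true $X^*$ rather than the population-risk minimizer (the two need not coincide for a surrogate loss), and because $\mathcal{L}$ may be nonsmooth, as for the hinge; one must check that the base scale of the gaps keeps each perturbed coordinate's penalty sign tied to the unknown sign pattern, so that the estimator cannot systematically exploit a favorable direction. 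A secondary technical point is engineering the Gilbert--Varshamov packing to be simultaneously low rank and well separated in the sign metric, and confirming that Assumption~\ref{ass:balance} lets the single-triple KL bounds aggregate to $\kappa\, m\tau^2$ without a dimension-dependent blow-up; this mirrors, and is the information-theoretic counterpart of, the Rademacher-complexity control underlying Theorem~\ref{thm:upper}.
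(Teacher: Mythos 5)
Your high-level framework (packing plus Fano plus per-triple KL control) is the same as the paper's, but the step you yourself flag as the ``main obstacle'' is fatal, and your proposed resolution goes in the wrong direction: the linear-in-$\tau$ excess-risk separation you need is simply false in this model. If the true gap is $g^* \asymp \tau$, then under the Bradley--Terry--Luce law the response $Y$ is a nearly fair coin, with bias $2f(g^*)-1 \asymp \tau$, so the expected cost of predicting the opposite sign factors as (probability imbalance) times (loss difference). Explicitly, for $g=-g^*$,
\[
\E_{g^*}\bigl[\mathcal{L}(Yg)-\mathcal{L}(Yg^*)\bigr]
= \bigl(2f(g^*)-1\bigr)\bigl(\mathcal{L}(-g^*)-\mathcal{L}(g^*)\bigr)
\asymp |\mathcal{L}'(0)|\,\tau^2 ,
\]
and both factors vanish with $\tau$; no hypothesis on $\mathcal{L}$ can make this first order. (Trying to manufacture a first-order penalty by centering the gaps at some $g_0=\Theta(1)$ also fails: then the population risk has a nonzero derivative at $g_0$, so perturbing in one of the two directions \emph{decreases} risk, and misclassification in Fano's argument no longer implies positive excess risk.) Consequently, with your fixed-rank packing ($r\asymp\lambda$, $\log M \asymp \lambda(d_1+d_2)$, total KL $\asymp m\tau^2$), Fano forces $\tau^2 \lesssim \lambda(d_1+d_2)/m$, and the true quadratic separation yields only the fast rate $\lambda(d_1+d_2)/m$ --- not the claimed $\sqrt{\lambda(d_1+d_2)/m}$.

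The paper closes this gap not by improving the separation but by enlarging the packing as the perturbation shrinks: it uses entries of size $\gamma$ and rank $B=\lambda\gamma^{-2}$ (a random $\pm\gamma$ block of size $B\times d_2$, stacked $d_1/B$ times), which still obeys $\|X\|_* \le \sqrt{\rank(X)}\,\|X\|_F \le \sqrt{\lambda d_1 d_2}$ but has packing entropy $\log\ell \asymp B d_2 = \lambda d_2/\gamma^2$, growing as $\gamma$ decreases. The Fano balance $m\gamma^2 \asymp \lambda d_2/\gamma^2$ then pins $\gamma^2 \asymp \sqrt{\lambda d_2/m}$, and the unavoidable \emph{quadratic} separation $c\gamma^2$ is exactly the slow rate in the theorem. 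In short, the extra entropy that the nuclear-norm ball carries at high rank and small entry size is what converts a second-order risk gap into a $\sqrt{\lambda(d_1+d_2)/m}$ lower bound; a rank-$\Theta(\lambda)$ Gilbert--Varshamov family cannot do it, so your construction needs to be replaced by (or adapted to) the paper's rank-inflation device.
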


Together, Theorems~\ref{thm:upper} and~\ref{thm:lower} show that (up to logarithmic factors)
if $X^*$ has rank $r$ then about $r (1 + d_2 / d_1)$ comparisons per user
are necessary and sufficient for learning the users' preferences.

\subsubsection{Maximum likelihood estimation of $X^*$}

By specializing the loss function $\mathcal{L}$, Theorem~\ref{thm:upper}
has a simple corollary for maximum-likelihood estimation of $X^*$.
Recall that if $\mu$ and $\nu$ are two probability distributions on a finite set $S$
the the Kullback-Leibler divergence between them is
\[
D(\mu \| \nu) = \sum_{s \in S} \mu(s) \log\frac{\mu(s)}{\nu(s)},
\]
under the convention that $0 \log 0 = 0$. We recall that although $D(\cdot \| \cdot)$
is not a metric it is always non-negative, and that $D(\mu \| \nu) = 0$ implies
$\mu = \nu$.

\begin{corollary}\label{cor:upper}
Let $Y$ and $\Omega$ be distributed as $\P_{X^*}$ for some $d_1 \times d_2$ matrix $X^*$.
Define the loss function $\mathcal{L}$ by
$\mathcal{L}(z) = \log(1 + \exp(z)) - z$.
Under Assumption~\ref{ass:balance},
\begin{align*}
\frac{1}{d_1 d_2^2} \sup_{\{X: \|X\|_* \le \sqrt{\lambda d_1 d_2}\}} D(\P_{X^*} \| \P_{\hat X}) - D(\P_{X^*} \| \P_{X})
\le C \kappa \sqrt{\frac{\lambda (d_1 + d_2)}{m}} \log (d_1 + d_2),
\end{align*}
where $C$ is a universal constant.
\end{corollary}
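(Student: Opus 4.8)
The plan is to observe that, for the specified logistic loss, the expected risk $R(X)$ coincides with the total Kullback--Leibler divergence $D(\P_{X^*}\|\P_X)$ up to an overall scaling and an additive constant that depends only on $X^*$; the corollary then follows by substituting this identity into Theorem~\ref{thm:upper}. First I would check the hypothesis of that theorem. Writing $\mathcal{L}(z)=\log(1+e^z)-z=\log(1+e^{-z})$, we have $\mathcal{L}'(z)=-1/(1+e^z)$, so $|\mathcal{L}'|\le 1$ and $\mathcal{L}$ is $1$-Lipschitz. Hence Theorem~\ref{thm:upper} applies verbatim, giving $\E R(\hat X)\le \inf_{\|X\|_*\le\sqrt{\lambda d_1 d_2}} R(X)+C\kappa\sqrt{\lambda(d_1+d_2)/m}\,\log(d_1+d_2)$.

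The computational heart is relating $R$ to $D$. Fix a coordinate $(i,j,k)$, set $z=X_{ij}-X_{ik}$, $z^*=X^*_{ij}-X^*_{ik}$, and $p^*=e^{z^*}/(1+e^{z^*})=\Pr_{X^*}(Y_{ijk}=1)$. A short calculation gives
\[
\E_{X^*}\,\mathcal{L}\bigl(Y_{ijk}(X_{ij}-X_{ik})\bigr)=p^*\log(1+e^{-z})+(1-p^*)\log(1+e^{z}),
\]
which is exactly the cross-entropy between the Bernoulli laws with parameters $p^*$ and $p=e^z/(1+e^z)$. Therefore it equals $D(\mathrm{Bern}(p^*)\|\mathrm{Bern}(p))+H(p^*)$, where the binary entropy $H(p^*)$ of the truth does not depend on $X$. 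Since under $\P_X$ the comparisons $\{Y_{ijk}:j<k\}$ are independent across coordinates, the divergence factorizes: $D(\P_{X^*}\|\P_X)=\sum_i\sum_{j<k}D(\mathrm{Bern}(p^*_{ijk})\|\mathrm{Bern}(p_{ijk}))$.

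Next I would reconcile the ordered sum $\sum_{j,k=1}^{d_2}$ in the definition of $R$ with the unordered sum $\sum_{j<k}$ in $D$. The diagonal terms $j=k$ give $z=0$, contributing a constant $\log 2$ and zero divergence, so they cancel in any difference of risks; for off-diagonal terms the model symmetry $Y_{ikj}=-Y_{ijk}$ yields $p^*_{ikj}=1-p^*_{ijk}$ and $D_{ikj}=D_{ijk}$, so summing over ordered pairs simply doubles the unordered sum. Collecting these facts gives $R(X)=\tfrac{2}{d_1 d_2^2}D(\P_{X^*}\|\P_X)+\tfrac{\mathrm{const}(X^*)}{d_1 d_2^2}$, whence $R(X)-R(X')=\tfrac{2}{d_1 d_2^2}\bigl(D(\P_{X^*}\|\P_X)-D(\P_{X^*}\|\P_{X'})\bigr)$. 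Substituting into the bound from Theorem~\ref{thm:upper}, the entropy constant cancels in $\E R(\hat X)-\inf_X R(X)$, leaving $\tfrac{2}{d_1 d_2^2}\bigl(\E D(\P_{X^*}\|\P_{\hat X})-\inf_X D(\P_{X^*}\|\P_X)\bigr)$ on the left; since $\inf_X D(\P_{X^*}\|\P_X)$ is independent of the maximizing variable, $\sup_X[D(\P_{X^*}\|\P_{\hat X})-D(\P_{X^*}\|\P_X)]$ equals $D(\P_{X^*}\|\P_{\hat X})-\inf_X D(\P_{X^*}\|\P_X)$, matching the left-hand side of the corollary (read in expectation over the data, as in Theorem~\ref{thm:upper}). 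Absorbing the factor $2$ into the universal constant $C$ completes the argument.

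All of the statistical difficulty is already contained in Theorem~\ref{thm:upper}, which the corollary merely specializes, so I do not expect a genuine obstacle. The only point requiring care is the bookkeeping in the third step: confirming the $\tfrac{2}{d_1 d_2^2}$ normalization against the $\sum_{j<k}$ convention, disposing of the diagonal terms correctly, and verifying that the $X^*$-dependent entropy constant cancels cleanly so that no spurious $X^*$-dependent remainder survives in the final bound.
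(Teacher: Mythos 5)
Your proposal is correct and takes essentially the same approach the paper intends: the paper presents Corollary~\ref{cor:upper} as a direct specialization of Theorem~\ref{thm:upper} (remarking only that $\mathcal{L}(z)=\log(1+e^z)-z$ is the negative log-likelihood of the logistic model, so $\hat X$ is the MLE), and your identity $R(X)=\tfrac{2}{d_1 d_2^2}D(\P_{X^*}\|\P_X)+\mathrm{const}(X^*)$ is exactly the bookkeeping that justifies this. The paper omits these details entirely, and your write-up supplies them correctly, including the Lipschitz check, the cross-entropy decomposition, and the handling of diagonal and ordered-pair terms.
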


Note that the loss function in Corollary~\ref{cor:upper} is exactly the negative logarithm
of the logistic function, and so $\hat X$ in Corollary~\ref{cor:upper} is the maximum-likelihood
estimate for $X^*$. Thus, Corollary~\ref{cor:upper} shows that the distribution
induced by the maximum-likelihood estimator is close to the true distribution in
Kullback-Leibler divergence.

%!TEX root = ../collranking.tex

\section{Large-scale Non-convex Implementation}

While the convex relaxation is statistically near optimal, it is not ideal for large-scale datasets because it requires the solution of a convex program with $d_1 \times d_2$ variables. In this section we develop a non-convex variant which both scales and parallelizes very well, and has better empirical performance as compared to several existing empirical baseline methods.

Our approach is based on the following steps:
\begin{enumerate}
\item We represent the low-rank matrix in explicit factored form $X=UV^\top$ and replace the regularizer appropriately. This results in a non-convex optimization problem in $U \in \mathbb{R}^{d_1 \times r}$ and $V \in \mathbb{R}^{d_2 \times r}$, where $r$ is the rank parameter.
\item We solve the non-convex problem by alternating between updating $U$ while keeping $V$ fixed, and vice versa. With the hinge loss (which we found works best in experiments), each of these becomes an SVM problem - hence we call our algorithm AltSVM.
\item The problem is of course not symmetric in $U$ and $V$ because users rank items but not vice versa. For the $U$ update, each user vector naturally decouples and can be done in parallel (and in fact just reduces to the case of rankSVM \cite{JoachimsRankSVM}).
\item For the $V$ update, we show that this can {\em also} be made into an SVM problem; however it involves coupling of all item vectors, and all user ratings. We employ several tricks (detailed below) to speed up and effectively parallelize this step.
\end{enumerate}

The non-convex problem can be written as
\begin{align} \label{eqn:formulationUV}
\min_{U, V} ~ \sum_{(i,j,k) \in \Omega} \mathcal{L}(Y_{ijk} \cdot u_i^\top (v_j - v_k)) + \frac{\lambda}{2} (\|U\|_F^2 + \|V\|_F^2)
\end{align}
where we replace the nuclear norm regularizer using the property $\|X\|_* = \min_{X = U V^\top} \frac{1}{2}(\|U\|_F^2 + \|V\|_F^2)$ \cite{MMMF}. $u_i^\top$ and $v_i^\top$ denote the $i$th rows of $U$ and $V$, respectively. While this is a non-convex algorithm for which it is hard to find the global optimum, it is computationally more efficient since only $(d_1+d_2)r$ variables are involved. We propose to use L2 hinge loss, i.e., $\mathcal{L}(x) = \max(0,1-x)^2$.

In the alternating minimization of \eqref{eqn:formulationUV}, the subproblem for $U$ is to solve
\begin{align}
U \leftarrow & \arg \min_{U \in \mathbb{R}^{d_1 \times r}} \sum_{(i,j,k) \in \Omega} \mathcal{L}(Y_{ijk} \cdot u_i^\top (v_j - v_j)) + \frac{\lambda}{2} \|U\|_F^2, \label{eqn:subproblemU}
\end{align}
while $V$ is fixed. This can be decomposed into $n$ independent problems for $u_i$'s where each solves for
\begin{align} \label{eqn:usermin}
u_i \leftarrow & \arg \min_{u \in \mathbb{R}^r} \frac{\lambda}{2} \|u\|_2^2 + \sum_{(j,k) \in \Omega_i} \mathcal{L}(Y_{ijk} \cdot u^\top (v_j - v_k) .
\end{align}
This part is in general a small-scale problem as the dimension is $r$, and the sample size is $|\Omega_i|$ for each user $i$.

On the other hand, solving for $V$ with fixed $U$ can be written as
\begin{align} \label{eqn:itemmin}
V \leftarrow & \arg \min_{V \in \mathbb{R}^{d_2 \times r}} \left\{ \frac{\lambda}{2} \|V\|_F^2 + \sum_{(i,j,k) \in \Omega} \mathcal{L}( \langle V, A^{(u,i,j)} \rangle ) \right\}
\end{align}
where $A^{(i,j,k)} \in \mathbb{R}^{d_2 \times r}$ is such that the $l$th row of $A^{(i,j,k)}$ is $Y_{ijk} \cdot u_i^\top$ if $l=j$, $-Y_{ijk} \cdot u_i^\top$ if $l=k$, and $0$ otherwise. It is a much larger SVM problem than \eqref{eqn:usermin} as the dimension is $d_2 r$ and the sample size is $|\Omega|$.

We note that the feature matrices $\{A^{(i,j,k)} : (i,j,k) \in \Omega\}$ are highly sparse since in each feature matrix only $2r$ out of the $d_2 r$ elements are nonzero. This motivates us to apply the stochastic dual coordinate descent algorithm \cite{DualSVM,SDCA}, which not only converges fast but also takes advantages of feature sparsity in linear SVMs. Each coordinate descent step takes $O(r)$ computation, and iterations over $|\Omega|$ coordinates provide linear convergence \cite{SDCA}.

Now we describe the dual problems of our two subproblems explicitly. Let $\alpha \in \mathbb{R}^{|\Omega_i|}$ denote the dual vector for \eqref{eqn:usermin}, in which each coordinate is denoted by $\alpha_{ijk}$ where $(j,k) \in \Omega_i$. Then the dual problem of \eqref{eqn:usermin} is to solve
\begin{align} \label{eqn:usermin_dual}
\min_{\alpha \in \mathbb{R}^{|\Omega_i|}, \alpha \ge 0} & \frac{1}{2} \left\| \sum_{(j,k) \in \Omega_i} \alpha_{ijk} Y_{ijk} (v_j - v_k) \right\|_2^2 + \frac{1}{\lambda} \sum_{(j,k) \in \Omega_i} \mathcal{L}^*(- \lambda \alpha_{ijk})	
\end{align}
where $\mathcal{L}^*(z)$ is the convex conjugate of $\mathcal{L}$. %\footnote{For L2 hinge loss, $\mathcal{L}^*(z) = z^2/4 + z$ on $z \in (-\infty,0]$.}
At each coordinate descent step for $\alpha_{ijk}$, we find the value of $\alpha_{ijk}$ minimizing \eqref{eqn:usermin_dual} while all the other variables are fixed. If we maintain $u_i = \sum_{(j,k) \in \Omega_i} \alpha_{ijk} Y_{ijk} (v_j - v_k)$, then the coordinate descent step is simply to find $\delta^*$ minimizing%\footnote{For L2 hinge loss, $\delta^* = \frac{1 - Y_{ijk}u_i^\top(v_j-v_k)-\lambda \alpha_{ijk}/2}{\|v_j-v_k\|_2^2+\lambda/2}$.}
\begin{align} \label{eqn:user_dcd}
\frac{1}{2} \left\| u_i  + \delta^* Y_{ijk} (v_j - v_k) \right\|_2^2 + \frac{1}{\lambda} \mathcal{L}^*(- \lambda (\alpha_{ijk} + \delta^*))
\end{align}
and update $\alpha_{ijk} \leftarrow \alpha_{ijk} + \delta^*$.

The dual problem of \eqref{eqn:itemmin} is to solve
\begin{align} \label{eqn:itemmin_dual}
\min_{\beta \in \mathbb{R}^{|\Omega|}, \beta \ge 0} & \frac{1}{2} \left\| \sum_{(i,j,k) \in \Omega} \beta_{ijk} A^{(i,j,k)} \right\|_F^2 + \frac{1}{\lambda} \sum_{(i,j,k) \in \Omega} \mathcal{L}^*(-\lambda\beta_{ijk})
\end{align}
where $\beta$ is the dual vector for the subproblem \eqref{eqn:itemmin}. Similarly to $\alpha_{ijk}$, the coordinate descent step for $\beta_{ijk}$ is to replace $\beta_{ijk}$ by $\beta_{ijk} + \delta^*$ where $\delta^*$ minimizes%\footnote{For L2 hinge loss, $\delta^* = \frac{1 - Y_{ijk}u_i^\top(v_j-v_k)-\lambda \beta_{ijk}/2}{2 \|u_i\|_2^2+\lambda/2}$.}
\begin{align} \label{eqn:item_dcd}
\frac{1}{2} \left( \left\| v_j  + \delta^* Y_{ijk} u_i \right\|_2^2 + \left\| v_k - \delta^* Y_{ijk} u_i \right\|_2^2 \right) + \mathcal{L}^*(-\lambda(\beta_{ijk} + \delta^*)),
\end{align}
and maintain $V = \sum_{(i,j,k) \in \Omega} \beta_{ijk} Y_{ijk} A^{(i,j,k)}$. 

The detailed description of AltSVM is presented in Algorithm \ref{alg:altsvm_stoc}. In each subproblem, we run the stochastic dual coordinate descent, in which a pairwise comparison $(i,j,k) \in \Omega$ is chosen uniformly at random, and the dual coordinate descent for $\alpha_{ijk}$ or $\beta_{ijk}$ is computed. We note that each coordinate descent step takes the same $O(r)$ computational cost in both subproblems, while the subproblem sizes are much different.

\begin{algorithm}[tb]
	\caption{Alternating Support Vector Machine (AltSVM)}
	\label{alg:altsvm_stoc}
\begin{algorithmic}[1]
\REQUIRE $\Omega$, $\{Y_{ijk} : (i,j,k) \in \Omega\}$, and $\lambda \in \mathbb{R}^+$
\ENSURE $U \in \mathbb{R}^{d_1 \times r}$, $V \in \mathbb{R}^{d_2 \times r}$
\STATE Initialize $U$, and set $\alpha, \beta \leftarrow 0 \in \mathbb{R}^{|\Omega|}$
\WHILE {not converged}
	\STATE $v_j \leftarrow \sum_{(i,j,k) \in \Omega} \beta_{ijk} Y_{ijk} u_i$ \\ \qquad \quad $- \sum_{(i,k,j) \in \Omega} \beta_{ikj} Y_{ikj} u_i$, $\forall j \in [d_2]$
	\FOR {\textbf{all threads} $t=1,\ldots,T$ \textbf{in parallel}}
		\FOR {$s = 1, \ldots, S$}
			\STATE Choose $(i,j,k) \in \Omega$ uniformly at random
			\STATE Find $\delta^*$ minimizing \eqref{eqn:item_dcd}.
			\STATE $\beta_{ijk} \leftarrow \beta_{ijk} + \delta^*$
			\STATE $v_j \leftarrow v_j + \delta^* Y_{ijk} u_i$
			\STATE $v_k \leftarrow v_k - \delta^* Y_{ijk} u_i$
		\ENDFOR
	\ENDFOR
	\STATE $u_i \leftarrow \sum_{(i,j,k) \in \Omega} \alpha_{ijk} Y_{ijk} (v_j - v_k)$, $\forall i \in [d_1]$
	\FOR {\textbf{all threads} $t=1,\ldots,T$ \textbf{in parallel}}
		\FOR {$s = 1, \ldots, S$}
			\STATE Choose $(i,j,k) \in \Omega$ uniformly at random.
			\STATE Find $\delta^*$ minimizing \eqref{eqn:user_dcd}.
			\STATE $\alpha_{ijk} \leftarrow \alpha_{ijk} + \delta^*$
			\STATE $u_i \leftarrow u_i + \delta^* Y_{ijk} (v_j - v_k)$
		\ENDFOR
	\ENDFOR
\ENDWHILE
\end{algorithmic}
\end{algorithm}

\subsection{Parallelization}
For each subproblem, we parallelize the stochastic dual coordinate descent algorithm asynchronously without locking. Given $T$ processors, each processor randomly sample a triple $(i,j,k) \in \Omega$ and update the corresponding dual variable and the user or item vectors. We note that this update is for a sparse subset of the parameters. In the user part, a coordinate descent step for one sample updates only $r$ out of the $rd_1$ variables. In the item part, one coordinate descent step for a sample update only $2r$ out of the $rd_2$ variables. This motivates us not to lock the variables when updated, so that we ignore the conflicts. This lock-free parallelism is shown to be effective in \cite{Hogwild} for stochastic gradient descent (SGD) on the sum of sparse functions. Moreover, in \cite{StochasticDCD}, it is also shown that the stochastic dual coordinate descent scales well without locking. We implemented the algorithm using the OpenMP framework. In our implementations, we also parallelized steps 3 and 13 of Algorithm \ref{alg:altsvm_stoc}. We show in the next section that our proposed algorithm scales up favorably.

\subsection{Remark on the implementation}
In Algorithm \ref{alg:altsvm_stoc}, the subproblem for $V$ comes first, and then it solves for the user vectors $U$. We empirically observed that this order gives better convergence on practical datasets. We also note that each subproblem reuses the dual variables in the previous outer iteration. When almost converged, the features ($V$ for solving $U$, and $U$ for solving $V$) do not change too much. By reusing the dual variables in the previous iteration we can start with a feasible solution close to the optimum. 

%!TEX root = ../collranking.tex

\def\B{\textbf}

\section{Experimental results}

\subsection{Pairwise data}
We used the MovieLens 100k dataset, which contains 100,000 ratings given by 943 users on 1682 movies. The ratings are given as integers from one to five, but we converted them into preference data by declaring that a user preferred one movie to another if they gave it a higher rating (if two movies received the same rating, we treated it as though the user did not provide a preference). Then we held out $20\%$ of the data as a test set.

We compared our algorithm to the following two:
\begin{itemize}
\item Bayesian Personalized Ranking (BPR) \cite{RFGS:09}: This algorithm is based on a similar model to ours,
but a different optimization procedure (essentially, a variant of stochastic gradient descent).

\item Matrix completion from pairwise differences : A standard matrix completion algorithm that observes -- for various triples $(i,j,k) \in \Omega$ -- the difference between user $i$'s ratings for item $j$ and item $k$. Note that this algorithm has an advantage over~\eqref{eqn:convex} because it sees the magnitude of this difference instead of only its sign. Nevertheless, the matrix completion algorithm does not perform any better than~\eqref{eqn:convex}. A similar phenomenon was also observed in~\cite{DPVW:13}.
\end{itemize}

We evaluate our performance by computing the proportion of pairwise comparisons in the test set $\mathcal{T}$ for which we correctly infer the user's preference.
$$
(\text{Prediction error}) = \frac{1}{|\mathcal{T}|} \sum_{(i,j,k) \in \mathcal{T}, Y_{ijk} = 1} \mathbb{I}( X_{ij} > X_{ik})
$$
This is similar to the AUC statistic measured by Rendle et al.~\cite{RFGS:09}, and if the data were fully observed then it would measure Kendall's distance between each user's true preferences and the learned ones. However, our main reason for choosing this measure of performance is that, as an average accuracy over all pairwise comparisions, it resembles the quantity that we study in our theoretical bounds.

Unsurprisingly, we were more accurate at correctly inferring strong preferences; therefore, we have also shown the accuracy obtained by only measuring performance on pairs whose rankings differ by two or more. Both the methods we considered do measurably better at predicting these orderings.

\begin{figure}
\begin{center}
\includegraphics[width=0.7\textwidth]{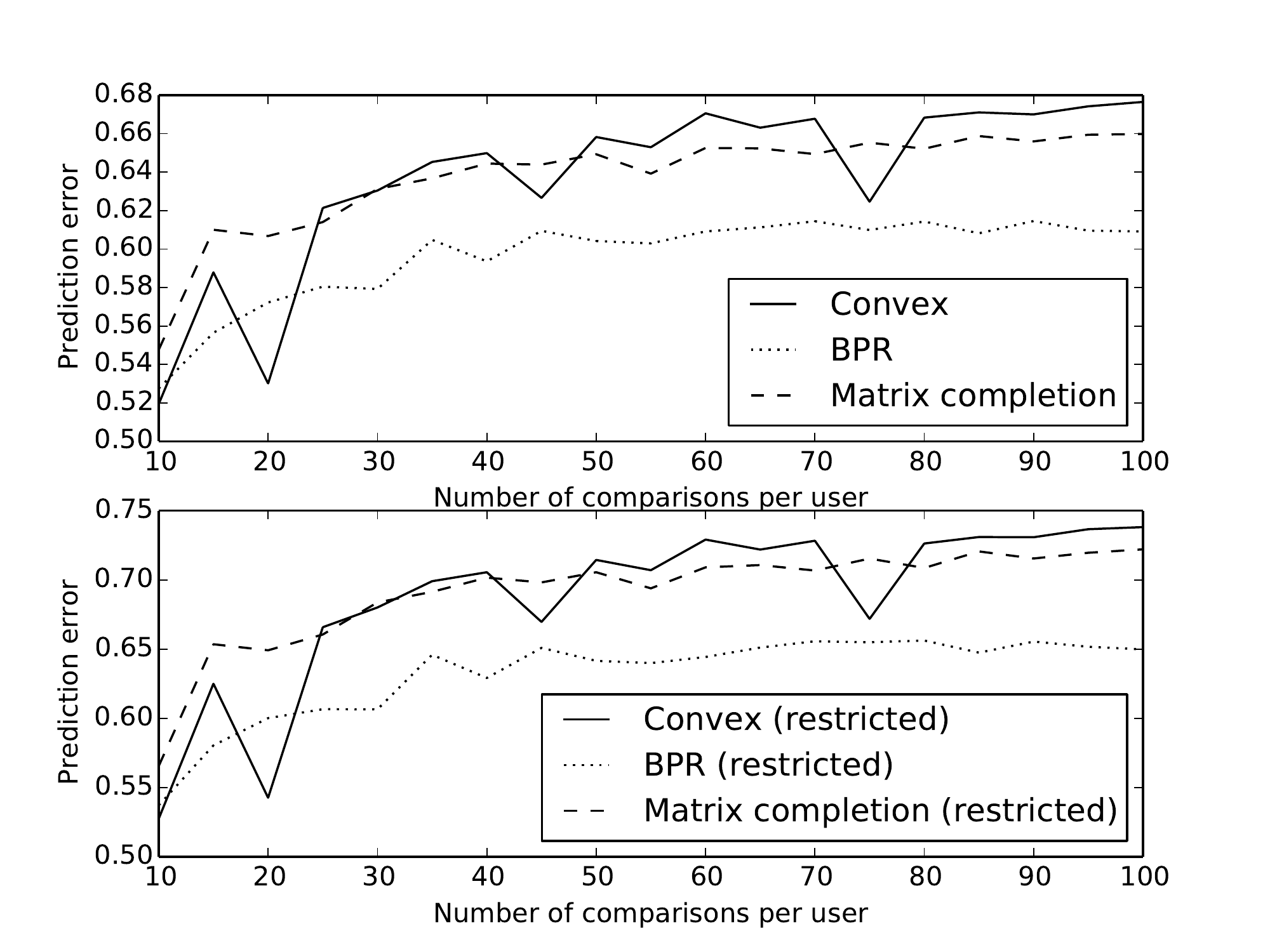}
\caption{Prediction accuracy on the MovieLens 100k dataset, for different numbers of observed
comparisions per user. For the ``restricted'' plots, only the pairs with a rating difference of two
or more were used for evaluation.}
\label{fig:ml-100k}
\end{center}
\end{figure}

\subsection{Large-scale experiments on rating data}
Now we demonstrate that our algorithm performs well as a collaborative ranking method on rating data. We used the datasets specified in Table \ref{tab:datasets}. Given a training set of ratings for each user, our algorithm will only use non-tying pairwise comparisons from the set, while other competing algorithms use the ratings themselves. Hence, they have more information than ours. The competing algorithms are those with publicly available codes provided by the authors.

\begin{itemize}
\item CofiRank \cite{CofiRank}\footnote{\url{http://www.cofirank.org}, The dimension and the regularization parameter are set as suggested in the paper. For the rest of the parameters, we left them as provided.} This algorithm uses alternating minimization to directly optimize NDCG.

\item Local Collaborative Ranking (LCR) \cite{LocalColRanking}\footnote{\url{http://prea.gatech.edu}, We run the code with each of the 48 sets of loss function and parameters given in the main code, and the best result is reported. We could not run this algorithm on the Netflix dataset due to time constraint.} : The main idea is to predict preferences from the weighted sum of multiple low-rank matrices model.

\item RobiRank \cite{RobiRank}\footnote{\url{https://bitbucket.org/d_ijk_stra/robirank}, We used the part for collaborative ranking from binary relevence score. We left the parameter settings as provide with the implementation.} : This algorithm uses stochastic gradient descent to optimize the loss function motivated from robust binary classification. 

\item Global Ranking : To see the effect of personalized ranking, we compare the results with a global ranking of the items. We fixed $U$ to all ones and solved for $V$.

\end{itemize}

\begin{table}[t]
\begin{center}
{\small
  \begin{tabular}{|c|ccc|}
\hline
        & MovieLens1m & MovieLens10m &     Netflix \\
\hline
Users   &       6,040 &       71,567 &     480,000 \\
Items   &       3,900 &       10,681 &      17,000 \\
Ratings &   1,000,209 &   10,000,054 & 100,000,000 \\
\hline
  \end{tabular}
}
  \caption{Datasets to be used for simulation} \label{tab:datasets}
\end{center}  
\end{table}

The algorithms are compared in terms of two standard performance measures of ranking, which are NDCG and Precision@$K$.
NDCG@$K$ is the ranking measure for numerical ratings. NDCG@$K$ for user $i$ is defined as
\begin{align*}
\mathrm{NDCG@}K(i) &= \frac{\mathrm{DCG@}K(i,\pi_i)}{\mathrm{DCG@}K(i,\pi^*_i)}
\end{align*}
where
\begin{align*}
\mathrm{DCG@}K(i,\pi_i) &= \sum_{k=1}^{K} \frac{2^{M_{i\pi_i(k)}} - 1}{\log_2 (k+1)},
\end{align*}
and $\pi_u(k)$ is the index of the $k$th ranked item of $\mathcal{T}_i$ in our prediction. $M_{ij}$ is the true rating of item $j$ by user $i$ in the given dataset, and $\pi_u^*$ is the permutation that maximizes DCG@$K$. This measure counts only the top $K$ items in our predicted ranking and put more weights on the prediction of highly ranked items. We measured NDCG@$10$ in our experiments. Precision@$K$ is the ranking measure for binary ratings. Precision@$K$ for user $i$ is defined as
\begin{align*}
\mathrm{Precision@}K(i) &= \frac{1}{K} \sum_{j \in \mathcal{P}_K(i)} M_{ij}
\end{align*}
where $M_{ij}$ is the binary rating on item $j$ by user $i$ given in the dataset. This counts the number of relevant items in the predicted top $K$ recommendation. These two measures are averaged over all of the users.

\begin{figure*}[ht]
\begin{center}
  \includegraphics[width=.32\textwidth]{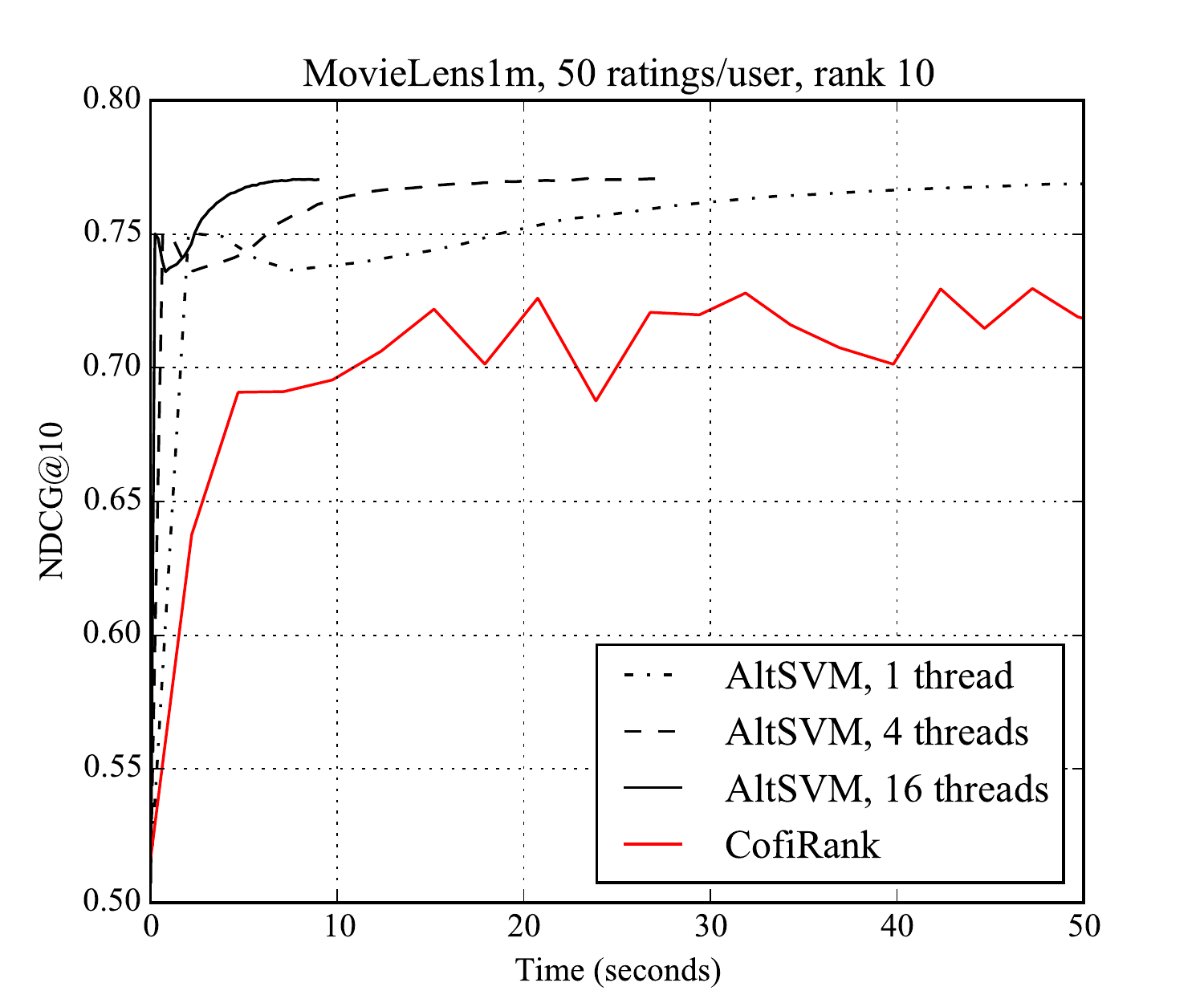}
  \includegraphics[width=.32\textwidth]{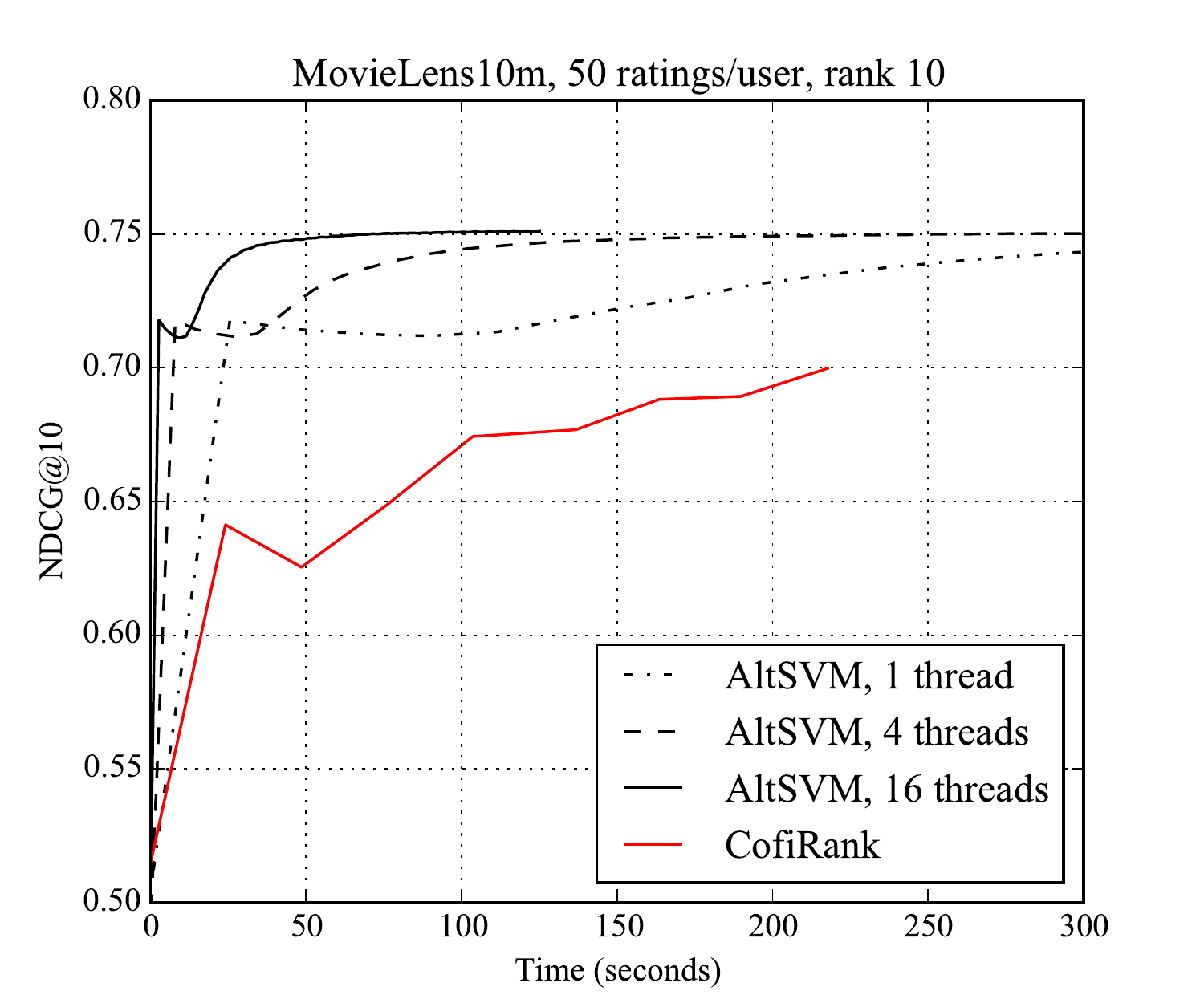}
  \includegraphics[width=.32\textwidth]{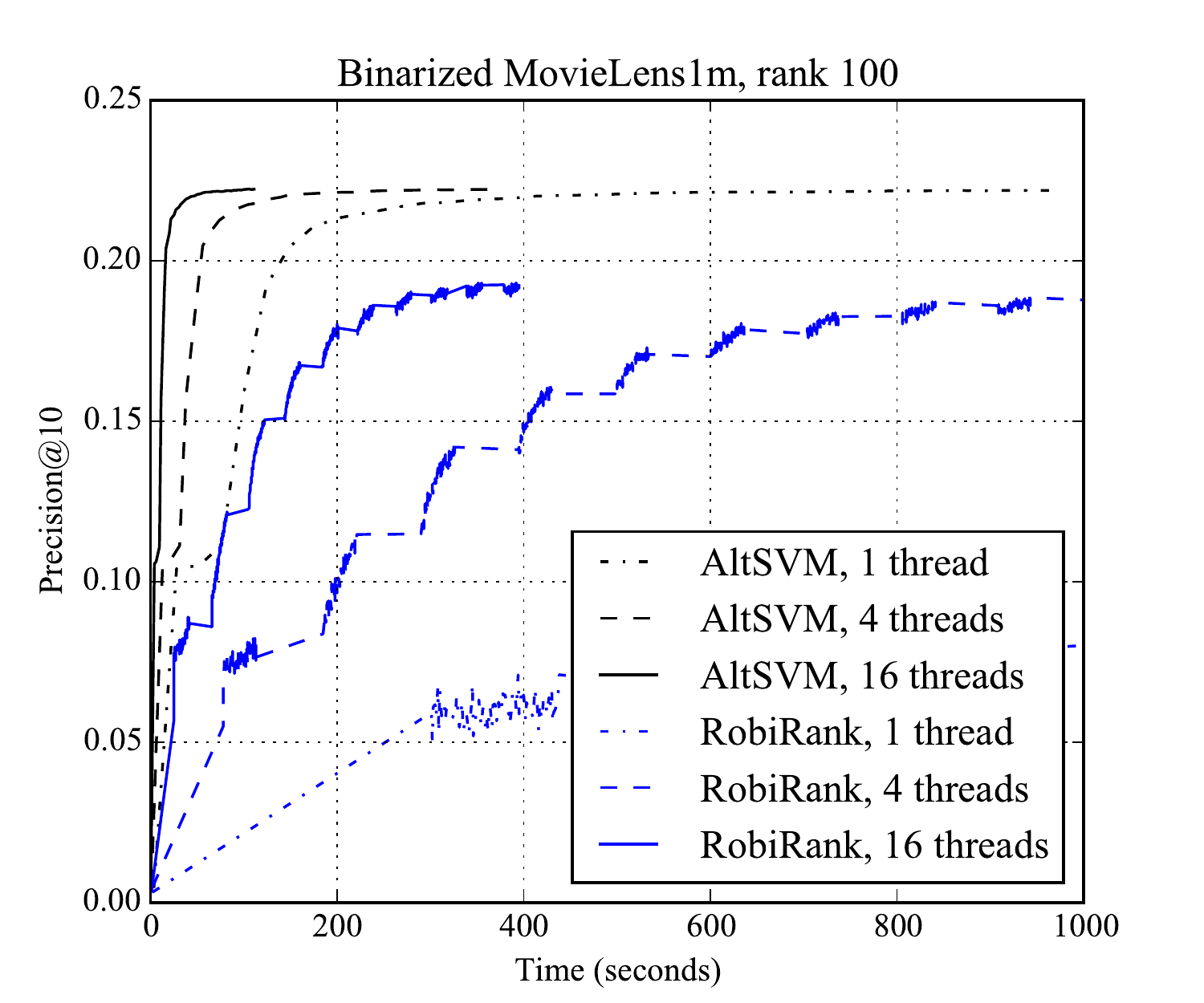}

  (a) \hspace{2.0in} (b) \hspace{2.0in} (c) 
  \caption{NDCG@10 and Precision@10 over time for different algorithms.}
  \label{fig:computation}
\end{center}
\end{figure*}

We first compare our algorithm with numerical rating based algorithms, CofiRank and LCR. We follow the standard setting that are used in the collaborative ranking literature \cite{CofiRank,BalakrishnanChopra,VolkovsZemel,LocalColRanking}. For each user, we subsampled $N$ ratings, used them for training, and took the rest of the ratings for test. The users with less than $N + 10$ ratings were dropped out. Table \ref{tab:NDCG} compares AltSVM with numerical rating based algorithms. While $N=20$ is too small so that a global ranking provides the best NDCG, our algorithm performs the best with larger $N$. We also ran our algorithm with subsampled pairwise comparions with the largest numerical gap (AltSVM-sub), which are as many as $N$ for each user (the number of numerical ratings used in the other algorithms). Even with this, we could achieve better NDCG. We can also observe that the statistical performance is better with the hinge loss than with the logistic loss.
\begin{table*}[t]
\begin{center}
{\scriptsize
\begin{tabular}{|c|c|cccccc|}
\hline
Datasets      & $N$ &   AltSVM & AltSVM-sub & AltSVM-logistic &  Global & CofiRank &   LCR  \\
\hline
              & 20  &   0.7308 &   0.6998 & 0.7125 &\B{0.7500}&   0.7333 &  0.7007 \\
MovieLens1m          & 50  &\B{0.7712}&   0.7392 & 0.7141 &   0.7501 &   0.7441 &  0.7081 \\
              & 100 &\B{0.7902}&   0.7508 & 0.7446 &   0.7482 &   0.7332 &  0.7151 \\
\hline
              & 20  &   0.7059 &   0.7053 & 0.7031 &\B{0.7264}&   0.7076 &  0.6977 \\
MovieLens10m         & 50  &\B{0.7508}&   0.7212 & 0.7115 &   0.7176 &   0.6977 &  0.6940 \\
              & 100 &\B{0.7692}&   0.7248 & 0.7292 &   0.7101 &   0.6754 &  0.6899 \\
\hline
              & 20  &   0.7132 &   0.6822 & - & \B{0.7605}&   0.6615 &    -    \\
Netflix       & 50  &\B{0.7642}&   0.7111 & - & \B{0.7640}&   0.6527 &    -    \\
              & 100 &\B{0.8007}&   0.7393 & - &    0.7656 &   0.6385 &    -    \\
\hline
\end{tabular}
\caption{NDCG@10 on different datasets, for different numbers of observed ratings per user.} \label{tab:NDCG}
}
\end{center}  
\end{table*}
\begin{table}[t]
\begin{center}
{\small
\begin{tabular}{|c|ccc|c|}
\hline
           &          &  AltSVM  &          &   RobiRank \\
Precision@ & $C=1000$ & $C=2000$ & $C=5000$ &            \\
\hline
1   &   0.2165 &   0.2973 &\B{0.3635}&     0.3009 \\
2   &   0.1965 &   0.2657 &\B{0.3297}&     0.2695 \\
5   &   0.1572 &   0.2097 &\B{0.2697}&     0.2300 \\
10  &   0.1265 &   0.1709 &\B{0.2223}&     0.1922 \\
100 &   0.0526 &   0.0678 &\B{0.0819}&     0.0781 \\
\hline
\end{tabular}
\caption{Precision@$K$ on the binarized MovieLens1m dataset.} \label{tab:BinData} 
}
\end{center}  
\end{table}
\begin{table}[t]
\begin{center}
{\small
\begin{tabular}{|c|ccccc|}
\hline
\# cores & 1 & 2 & 4 & 8 & 16 \\
\hline
Time(seconds) & 963.1 & 691.8 & 365.1 & 188.3 & 111.0 \\
\hline
Speedup & 1x & 1.4x & 2.6x & 5.1x & 8.7x \\
\hline
\end{tabular}
\caption{Scalability of AltSVM on the binarized MovieLens1m dataset.} \label{tab:scalability}
}
\end{center}  
\end{table}
% \begin{table}[t]
% \begin{center}
% {\small
% \begin{tabular}{|c|ccccc|}
% \hline
% \# cores & 1 & 2 & 4 & 8 & 16 \\
% \hline
% Time(seconds) & 963.1 & 691.8 & 365.1 & 188.3 & 111.0 \\
% \hline
% Speedup & 1x & 1.4x & 2.6x & 5.1x & 8.7x \\
% \hline
% \end{tabular}
% \caption{Scalability of AltSVM on the binarized MovieLens1m dataset.} \label{tab:scalability}
% }
% \end{center}  
% \end{table}

We have also experimented with collaborative ranking on binary ratings. We compare 
our algorithm against RobiRank \cite{RobiRank}, which is a recently proposed algorithm for 
collaborative ranking with binary ratings. We ran an experiment on a 
\emph{binarized} version of the Movielens1m dataset. In this case, the movies 
rated by a user is assumed to be relevant to the user, and the other items are 
not. Since it is inefficient to take all possible comparisons which are in 
average a half million per user, we subsampled $C$ comparisons for each user. 
Both algorithms are set to estimate rank-100 matrices. Table \ref{tab:BinData} 
shows that our algorithm provides better performance than RobiRank.

\subsection{Computational speed and Scalability}

We now show the computational speed and scalability of our practical algorithm, AltSVM. The experiments were run on a single 16-core machine in the Stampede Cluster at University of Texas. 
 
Figures \ref{fig:computation}a and \ref{fig:computation}b show NDCG@10 over time of our algorithms with 1, 4, and 16 threads, compared to CofiRank. Figure \ref{fig:computation}c shows Precision@10 over time of our algorithm with $C=5000$. We note that our algorithm converges faster, while the sample size $|\Omega|$ for our algorithm is larger than the number of training ratings that are used in the competing algorithms. Table \ref{tab:scalability} shows the scalability of AltSVM. We measured the time to achieve $10^{-5}$ tolerance on the binarized MovieLens1m dataset. As can be seen in the table, we could achieve significant speedup.

%!TEX root = ../collranking.tex

\section{Conclusion}

We considered the collaborative ranking problem where one fits a low-rank matrix to the pairwise comparisons by multiple users. We showed that the convex relaxation of the empirical risk minimization provides good generalization guarantees. For the large-scale practical settings, we also proposed a non-convex algorithm, which alternately solves two SVM problems. Our algorithm was shown to outperform the existing ones and parallelizes well.

\bibliography{collranking}
\bibliographystyle{icml2015}

\appendix
%!TEX root = ../supplementary.tex
\section{Proof of Theorem~3.1}
\label{sec:upper}
We write $L(X)$ for the function being optimized; i.e.,
\[
L(X) = \sum_{(i,j,k) \in \Omega} \mathcal{L}(Y_{i,j,k} (X_{i,j} - X_{i,k})).
\]
Note that for any fixed $X$, $\P_{X^*} L(X) = m R(X)$ (where $\P_{X^*}$ denotes the expectation
taken with respect to future samples from $\P_{X^*}$, as distinct from $\E$ which denotes
the expectation over the samples used to generate $\hat X$).
Let $K$ be the set of $d_1 \times d_2$ matrices with nuclear norm at most $1$.
The proof of Theorem~3.1 proceeds in three main steps.
\begin{enumerate}
 \item By some algebraic of manipulations $L$, we reduce the problem to showing a uniform law of large numbers for the family of functions $\{L(X): X \in \sqrt{\lambda d_1 d_2} K\}$.
 \item Using symmetrization and duality properties of $K$, we reduce the problem
 to bounding the norm of a matrix $M$ whose entries are sums of random signs.
 \item We bound the norm of $M$ using various concentration inequalities
 and a theorem of Seginer~\cite{Seginer:00}.
\end{enumerate}

Since $\hat X$, by definition, minimizes $L(\hat X)$, for any $\tilde X \in \sqrt{\lambda d_1 d_2} K$
we can bound
\begin{align*}
 \P_{X^*} [L(\hat X) - L(\tilde X)]
 &\le \P_{X^*} [L(\hat X)] - L(\hat X) - \left(\P_{X^*}[L(\tilde X)] - L(\tilde X)\right) \\
 &\le 2 \sup_{X \in \sqrt{\lambda d_1 d_2} k} |\P_{X^*} L(X) - L(X)|.
\end{align*}
In other words, it suffices to show a uniform law of large numbers
for $\{L(X): X \in \sqrt{\lambda d_1 d_2} K\}$.

Let $\epsilon_{i,j,k}$ be i.i.d. $\pm 1$-valued variables
and let $\xi_{i,j,k}$ be the indicator that $(i, j, k) \in \Omega$.
By Gin\'e-Zinn's symmetrization
(as in~\cite{DPVW:13}),
\begin{align*}
&\sup_{X \in \sqrt{\lambda d_1 d_2} K} |\P_{X^*} L(X) - L(X)| \\
&\le 2 \E \sup_{X \in \sqrt{\lambda d_1 d_2} K} \left|\sum_{i,j,k \in \Omega} \epsilon_{i,j,k}
\mathcal{L}(Y_{i,j,k} (X_{i,j} - X_{i,k}))\right|.
\end{align*}
Since $\mathcal L$ is 1-Lipschitz, we obtain
\begin{align*}
\sup_{X \in \sqrt{\lambda d_1 d_2} K} |\P_{X^*} [L(X)] - L(X)|
&\le 2\E \sup_{X \in \sqrt{\lambda d_1 d_2} K} \left |\sum_{i,j,k \in \Omega} \epsilon_{i,j,k}
Y_{i,j,k} (X_{i,j} - X_{i,k}) \right| \\
&= 2\E \sup_{X \in \sqrt{\lambda d_1 d_2} K} \left |\sum_{i,j,k} \xi_{i,j,k} \epsilon_{i,j,k}
(X_{i,j} - X_{i,k}) \right|,
\end{align*}
where in the last line, we recognized that $\epsilon_{i,j,k} Y_{i,j,k}$ has the
same distribution as $\epsilon_{i,j,k}$.
Now, let $M$ denote the matrix where $M_{ij}
= \sum_k (\xi_{i,j,k} \epsilon_{i,j,k} - \xi_{i,k,j} \epsilon_{i,k,j})$.
Then
\[
\sum_{i,j,k} \xi_{i,j,k} \epsilon_{i,j,k} (X_{i,j} - X_{i,k})
= \tr(M^T X)
\]
and so
\[
\sup_{X \in \sqrt{\lambda d_1 d_2} K} \sum_{i,j,k} \xi_{i,j,k} \epsilon_{i,j,k} (X_{i,j} - X_{i,k})
= \sup_{X \in \sqrt{\lambda d_1 d_2} K} \tr(M^T X) = \sqrt{\lambda d_1 d_2} \|M\|.
\]
Putting everything together, we have (for any $\tilde X \in \sqrt{\lambda d_1 d_2} K$)
\[
\E \left[\P_{X^*}[L(\hat X)] - \P_{X^*}[L(\tilde X)]\right]
\le 4\sqrt{\lambda d_1 d_2} \E \|M\|.
\]
Together with the following lemma (which we prove in Appendix~\ref{app:matrix}),
this completes the proof of Theorem~3.1

\begin{lemma}\label{lem:M-norm} With $p = \frac{m}{d_1 d_2}$,
\[
\E \|M\| \le C \kappa \sqrt{p (d_1 + d_2)} \log (d_1 d_2).
\]
\end{lemma}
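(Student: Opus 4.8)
The plan is to follow the three-step outline and bound $\E\norm{M}$ by controlling the rows and columns of $M$ and then invoking Seginer's theorem~\cite{Seginer:00}. Writing $M_{i\cdot}$ and $M_{\cdot j}$ for the rows and columns, the entries of $M$ are independent (across users) and symmetric, so Seginer's bound reduces the spectral norm to the expected largest Euclidean norm of a row and of a column,
\[
\E\norm{M} \le C\left(\E\max_{i}\norm{M_{i\cdot}}_2 + \E\max_{j}\norm{M_{\cdot j}}_2\right),
\]
possibly at the cost of a logarithmic factor because the entries are not identically distributed. The $\log(d_1 d_2)$ in the target will then come from this step together with the union bounds needed to pass from a single row (or column) to the maximum.

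The column estimate is the cleaner of the two, since the entries $\{M_{ij}\}_i$ of a fixed column are independent across users $i$. Each $M_{ij}=\sum_k(\xi_{i,j,k}\epsilon_{i,j,k}-\xi_{i,k,j}\epsilon_{i,k,j})$ is a sum of independent, mean-zero, $\{-1,0,1\}$-valued terms, so $\E M_{ij}^2=\sum_k(p_{i,j,k}+p_{i,k,j})$, and summing over $i$ with Assumption~\ref{ass:balance} gives $\E\norm{M_{\cdot j}}_2^2=\sum_i\E M_{ij}^2\lesssim \kappa p d_1$. I would then apply a Bernstein-type inequality to the sum of independent variables $M_{ij}^2$ to concentrate $\norm{M_{\cdot j}}_2^2$ about this mean and union bound over the $d_2$ columns, obtaining a bound of order $\kappa\sqrt{p d_1}\,\log(d_1 d_2)$. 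The row estimate is handled the same way but is more delicate: within a single row the entries $M_{ij}$ and $M_{ij'}$ both contain the comparison of items $j$ and $j'$, so $\norm{M_{i\cdot}}_2^2=\sum_j M_{ij}^2$ is not a sum of independent terms. Expanding the square splits off a diagonal part equal, up to lower-order corrections, to the number of comparisons made by user $i$ — whose mean is at most $\kappa p d_2$ by Assumption~\ref{ass:balance} and which concentrates as a sum of independent Bernoullis — from an off-diagonal mean-zero Rademacher chaos, which I would control by decoupling (Hanson--Wright). A union bound over the $d_1$ rows then yields a bound of order $\kappa\sqrt{p d_2}\,\log(d_1 d_2)$.

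Combining the two estimates through the Seginer reduction gives
\[
\E\norm{M} \lesssim \kappa\left(\sqrt{p d_1}+\sqrt{p d_2}\right)\log(d_1 d_2) \lesssim \kappa\sqrt{p(d_1+d_2)}\,\log(d_1 d_2),
\]
which is exactly Lemma~\ref{lem:M-norm}. The main obstacle is the row estimate: because the within-row entries are dependent, $\norm{M_{i\cdot}}_2^2$ is a quadratic functional of the underlying Bernoulli/Rademacher variables rather than a sum of independent terms, and individual entries $M_{ij}$ have a sub-exponential tail that lets them reach size $\sim\log(d_1 d_2)$, so producing a tail bound sharp enough to survive the union bound over $d_1$ rows — and to yield the clean dependence on the imbalance constant $\kappa$ and a single factor of $\log(d_1 d_2)$ — is the step requiring care. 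A direct matrix Bernstein inequality applied to the rank-one decomposition $M=\sum_{(i,j,k)}\xi_{i,j,k}\epsilon_{i,j,k}\,e_i(e_j-e_k)^\top$ gives the correct leading term, but its additive deviation term of order $\log(d_1+d_2)$ does not scale with $\sqrt{p}$ and hence fails in the sparse regime; this is why I would route the argument through the row and column norms via Seginer rather than through Bernstein directly.
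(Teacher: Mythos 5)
There is a genuine gap at the very first step: Seginer's theorem cannot be applied to $M$ itself, because the entries of $M$ are not jointly independent. Within a row, $M_{ij}$ and $M_{ij'}$ share randomness: the triple $(i,j,j')$ contributes $+\xi_{i,j,j'}\epsilon_{i,j,j'}$ to $M_{ij}$ and $-\xi_{i,j,j'}\epsilon_{i,j,j'}$ to $M_{ij'}$ (similarly for $(i,j',j)$). Your stated justification --- ``the entries of $M$ are independent (across users) and symmetric'' --- is not sufficient for Seginer's reduction, and your hedge about losing a log factor from non-identical distributions misdiagnoses the issue: the problem is independence, not identical distribution. Indeed, row-independence plus symmetry alone cannot give any such bound: take $M_{ij} = \epsilon_i$ for all $j$ (i.i.d.\ sign rows, each row constant); then $\E\|M\| = \sqrt{d_1 d_2}$ while $\E\max_i\|M_{i*}\|_2 + \E\max_j\|M_{*j}\|_2 = \sqrt{d_2} + \sqrt{d_1}$, so the row/column reduction fails by an unbounded factor. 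You do notice the within-row dependence later, when bounding $\|M_{i*}\|_2$ via a diagonal term plus a Hanson--Wright chaos --- but patching the row-norm estimate does not repair the Seginer step, which is exactly where joint independence of the entries is needed.

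The paper's proof contains the one idea your proposal is missing: decompose $M = M^{(1)} - M^{(2)}$ with $M^{(1)}_{ij} = \sum_{k \ne j} \xi_{i,j,k}\epsilon_{i,j,k}$ and $M^{(2)}_{ij} = \sum_{k \ne j} \xi_{i,k,j}\epsilon_{i,k,j}$, i.e., split each entry according to whether item $j$ sits in the first or second slot of the triple. Each triple then contributes to exactly one entry of $M^{(1)}$ (and one of $M^{(2)}$), so each of these matrices has jointly independent entries; Seginer applies to each, and $\E\|M\| \le 2\E\|M^{(1)}\|$ by the triangle inequality since the two pieces are equidistributed. A side benefit is that your hardest step evaporates: within a row of $M^{(1)}$ the entries are independent, so no decoupling or Hanson--Wright argument is needed. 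The paper instead bounds $\|M^{(1)}_{i*}\|_2^2 \le \|M^{(1)}_{i*}\|_0 \, \|M^{(1)}_{i*}\|_\infty^2$ and controls the sparsity ($\lesssim \kappa p d_2$ nonzeros per row) and the maximum entry ($\lesssim \kappa \log^2(d_1 d_2)$) separately via Bernstein's inequality and union bounds, which is where the $\log(d_1 d_2)$ factor enters. Your column-norm computation and the final combination $\kappa(\sqrt{p d_1} + \sqrt{p d_2}) \lesssim \kappa\sqrt{p(d_1+d_2)}$ would then go through essentially as written, and your closing observation about why a direct matrix Bernstein bound fails in the sparse regime is correct --- it is indeed the reason to route the argument through Seginer.
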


%!TEX root = ../supplementary.tex
\section{Proof of Lemma~\ref{lem:M-norm}}
\label{app:matrix}

We will decompose $M$ into two parts, $M = M^{(1)} - M^{(2)}$, with
\begin{align*}
M^{(1)}_{ij} &= \sum_{k \ne j} \xi_{i,j,k} \epsilon_{i,j,k} \\
M^{(2)}_{ij} &= \sum_{k \ne j} \xi_{i,k,j} \epsilon_{i,k,j}.
\end{align*}
Then $\|M\| \le \|M^{(1)}\| + \|M^{(2)}\|$. Since $M^{(1)}$ and $M^{(2)}$ have the same distribution,
\[
\E \|M\| \le 2 \E \| M^{(1)}\|,
\]
and so we are reduced to studying $M^{(1)}$, which has i.i.d.\ entries.
Now, we apply Seginer's theorem~\cite{Seginer:00}:
\begin{equation}\label{eq:seginer}
\E \|M^{(1)}\| \le C \left(\E \max_i \| M^{(1)}_{i*} \|_2 + \E \max_j \| M^{(1)}_{*j}\|_2\right),
\end{equation}
where $M^{(1)}_{i*}$ denotes the $i$th row of $M^{(1)}$ and $M^{(1)}_{*j}$ denotes the $j$th column,
and $\|\cdot\|_2$ denotes the Euclidean norm.

We will separate the task of bounding $\E \max_i \|M_{i*}^{(1)}\|_2$ into two parts:
if $\|x\|_0$ denotes the number of non-zero coordinates in $x$ and $\|x\|_\infty$ denotes
$\max_j |x_j|$ then $\|x\|_2 \le \sqrt{\|x\|_0} \|x\|_\infty$; with the Cauchy-Schwarz inequality,
this implies that
\begin{equation}\label{eq:splitting}
\left(\E \left[\max_i \|M_{i*}^{(1)}\|_2\right]\right)^2
\le \E \left[\max_i \|M^{(1)}_{i*}\|_0\right]
\E \left[\max_i \|M^{(1)}_{i*}\|_\infty^2\right]
\end{equation}

First, we will show that every row of $M^{(1)}$ is sparse.
Let $Z_{ij} = \sum_{k \ne j} \xi_{i,j,k}$ and let $Y_{ij}$ be the indicator that $Z_{ij} > 0$.
Recalling that
$\E \xi_{i,j,k} = p_{i,j,k}$, we have (by Assumption~3.1)
$\E Z_{ij} \le \kappa p$.
Since $Z_{ij}$ takes non-negative integer values, we have
$\Pr(Y_{ij} = 1) = \Pr(Z_{ij} > 0) \le \kappa p$. By Bernstein's inequality,
for any fixed $i$
\[
\Pr(\|M_{i*}^{(1)}\|_0 \ge \kappa d_2 p + t) \le
\Pr(\sum_{j=1}^{d_2} Y_{ij} \ge \kappa d_2 p + t) \le \exp\left(-\frac{t^2/2}{\kappa p d_2 + t/3}\right).
\]
Integrating by parts, we have
\[
\E \left[\|M_{i*}^{(1)}\|_0\right]
\le \kappa d_2 p + \int_{\kappa d_2 p}^\infty \Pr(\|M_{i*}^{(1)}\|_0 \ge t)\ dt \le \kappa d_2 p + \frac 38.
\]

Next, we will consider the size of the elements in $M^{(1)}$.
First of all, $M^{(1)}_{ij} \le Z_{ij}$ (this fairly crude bound
will lose us a factor of $\sqrt{\log(d_1 d_2)}$). Now, Bernstein's
inequality applied to $Z_{ij}$ gives
\[
\Pr(M_{ij}^{(1)} \ge \kappa p + t) \le
\Pr(Z_{ij} \ge \kappa p + t) \le \exp\left(-\frac{t^2/2}{\kappa p + t/3}\right).
\]
Taking a union bound over $i$ and $j$, if $t \ge C \kappa \log(d_1 d_2)$ then
\[
\Pr(\max_{ij} M^{(1)}_{ij} \ge t) \le d_1 d_2 \exp\left(-ct\right)
\le \exp(-c' t).
\]
Integrating by parts,
\[
\E \left[ \max_{ij} M^{(1)}_{ij} \right] \le \kappa \log^2(d_1 d_2) + \int_{\kappa \log^2(d_1 d_2)}^\infty
\Pr(\max_{ij} M^{(1)}_{ij} \ge \sqrt{t}) \ dt
\le \kappa \log^2(d_1 d_2) + C.
\]
Going back to~\eqref{eq:splitting}, we have shown that
\[
\E \max_i \|M^{(1)}_{i*}\| \le C \kappa \sqrt{p d_2} \log(d_1 d_2).
\]
The same argument applies to $M^{(1)}_{*j}$ (but with $\sqrt{p d_1}$ instead
of $\sqrt{p d_2}$), and so we conclude from~\eqref{eq:seginer} that
\[
\E \|M^{(1)}\| \le C \kappa \sqrt{p (d_1 + d_2)} \log (d_1 d_2).
\]

%!TEX root = ../supplementary.tex
\section{Proof of Theorem~3.2}
\label{sec:lower}

\subsection{A sketch of the proof} 
The proof of Theorem~3.2 uses Fano's inequality.
\begin{enumerate}
 \item We construct matrices $X^1, \dots, X^\ell$. These matrices all have
 small nuclear norm, and for every pair $i, j$ the KL-divergence between the induced
 observation distributions is $\Theta(\log \ell)$. We construct these matrices randomly,
 using concentration inequalities and a union bound to show that we can take $\ell$
 of the order $\sqrt{\lambda m (d_1 + d_2)}$.
 
 \item We apply Fano's inequality to show that if we generate data according to a
 randomly chosen $X^i$, then any algorithm has a reasonable chance to choose a different
 $X^j$ (using the fact that the KL-divergence is $O(\log \ell)$). Since
 the KL-divergence is $\Omega(\log \ell)$, this implies that the algorithm incurs a substantial
 penalty whenever it makes a wrong choice.
\end{enumerate}

In any application of Fano's inequality, the key is to construct a large number of
admissible models that are close to one another in KL-divergence. Specifically, if we can construct
distributions $\P_1, \dots, \P_\ell$ with $D(\P_i \| \P_j) + 1 \le \frac{1}{2} \log \ell$ for all $i, j$,
then given a single sample from some $\P_i$, no algorithm can accurately identify which $\P_i$ it
came from. In order to apply this denote by $\P_{X,m}$ the distribution of the data
when the true parameters are $X$.
We will construct $X^1 \dots, X^\ell \in \sqrt{\lambda d_1 d_2} K$ such that
for all $i \ne j$,
\begin{align}
D(\P_{X^i,m} \| \P_{X^j,m}) + 1 \le \frac 12 \log \ell, \label{eq:fano-requirement} \\
R_j(X^i) \ge R_j(X^j) + c \frac{\log \ell}{m} \label{eq:separation}
\end{align}
for some constant $c > 0$,
where $R_j$ denotes the expected risk when the true parameters are given by $X^j$.
Given a single observation from some $\P_{X^j,m}$,~\eqref{eq:fano-requirement}
will imply (by Fano's inequality) that no algorithm can correctly
identify which $X^j$ was the true parameter. On the other hand,~\eqref{eq:separation} will imply
that if the algorithm makes a mistake -- say it chooses $X^i$ for $i \ne j$ -- then its risk
will be $c \frac{\log \ell}{m}$ larger than the best in the class.
In particular, if we can prove~\eqref{eq:fano-requirement} and~\eqref{eq:separation}
with $\log \ell \sim \sqrt{\lambda m(d_1 + d_2)}$ then it will imply Theorem~3.2.

We construct a set of matrices satisfying~\eqref{eq:fano-requirement}
and~\eqref{eq:separation} using a probabilistic method.
Supposing that $d_2 \ge d_1$, we choose a parameter $\gamma > 0$ and set $B$ to be an integer
that is approximately $\lambda \gamma^{-2}$.
We define $X^1$ by filling its top $B \times d_2$ block with independent, uniform $\pm \gamma$ entries,
and then copying that top block $B/d_1$ times to fill the matrix. Then let
$X^2, \dots, X^\ell$ be independent copies of $X^1$.
First of all, each $X^i \in \sqrt{\lambda d_1 d_2} K$ because $\|X^i\|_* \le \sqrt{\rank(X^i)} \|X^i\|_F \le \sqrt{\lambda d_1 d_2}$.

Now, let us consider $D(\P_{X^1,m} \| \P_{X^2,m})$.
For a single $i,j,k$ triple, there is probability $1/4$ of having
$X^1_{i,j} - X^1_{i,k}$ different from $X^2_{i,j} - X^2_{i,k}$, in which case they differ by
$4 \gamma$. If $\gamma$ is bounded above, each different entry contributes
$\Theta(\alpha^2 \gamma^2)$ to the KL-divergence between $\P_{X^1,m}$ and $\P_{X^2,m}$. Since
about $m$ entries are observed in $\P_{X^1,m}$, we see that
\begin{equation}\label{eq:logl}
D(\P_{X^1,m} \| \P_{X^2,m}) \asymp m \gamma^2.
\end{equation}
On the other hand, $R_1(X^1)$ and $R_1(X^2)$ differ by $\Theta(\gamma^2)$, because
for a constant fraction of triples $i, j, k$, the chance that $Y_{i,j,k}$ is 1 differs by
$O(\gamma)$ in $X^1$ and $X^2$, and on the event that $Y_{i,j,k}$ differs in these two models
the loss differs by another $O(\gamma)$ factor.

Applying standard concentration inequalities, we show that one can apply the union bound
to $\ell = \exp(c B d_2)$ of these matrices. In view of~\eqref{eq:fano-requirement} and~\eqref{eq:logl},
we need to take $B d_2  = \frac{\lambda^2}{\gamma^2 d_1} \asymp m \gamma^2$. Eliminating $\gamma$,
we end up with $\log \ell \asymp \sqrt{\lambda m/d_1}$ (which is within a constant factor
of $\sqrt{\lambda m(d_1 + d_2)}$ under our assumption that $d_2 \ge d_1$).

%!TEX root = ../supplementary.tex
% \section{Full proof of Theorem 3.2}
% \label{app:lower}
\subsection{Some concentration lemmas}

We begin by quoting some standard concentration results (see, e.g.~\cite{Vershynin:12}).

\begin{definition}
A random variable $X$ is $\sigma^2$-subgaussian if $\E e^{\theta X} \le e^{\theta^2 \sigma^2/2}$ for all $\theta > 0$.
A random variable
$X$ is $L$-subexponential if $\E e^{\theta X} \le (1 - \theta^2 L^2)$ for $\theta < 1/L$.
\end{definition}

One can easily show that the product of two subgaussian variables is subexponential:

\begin{lemma}\label{lem:subexponential}
If $X$ is $\sigma^2$-subgaussian and $Y$ is $\tau^2$-subgaussian then
$XY$ is $C \sigma \tau$-subexponential for a universal constant $C$.
\end{lemma}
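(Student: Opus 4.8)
The plan is to prove Lemma~\ref{lem:subexponential} by a direct moment-generating-function computation, splitting the exponent via a Cauchy-Schwarz-type inequality so that I can exploit the subgaussianity of $X$ and $Y$ separately. The key obstacle is that $XY$ couples the two variables inside a single exponential $\E e^{\theta XY}$, so I cannot immediately apply the two hypotheses. The standard trick is to decorrelate the product using the elementary inequality $ab \le \frac{1}{2}(a^2 + b^2)$, or equivalently the polarization $\theta XY \le \frac{\theta}{2}(\eta X^2 + \eta^{-1} Y^2)$ for any $\eta > 0$, and then bound the two resulting exponential moments.

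First I would recall the standard equivalence of characterizations of subgaussianity: a $\sigma^2$-subgaussian variable $X$ has $\E e^{s X^2} \le 2$ (or a similar bounded value) for $s$ up to a constant multiple of $\sigma^{-2}$; this is the form that pairs naturally with the product structure. Then I would write, for suitable $\theta$,
\begin{align*}
\E e^{\theta XY} \le \E \exp\left(\tfrac{\theta}{2}(\eta X^2 + \eta^{-1} Y^2)\right),
\end{align*}
and apply Cauchy-Schwarz in the form $\E(FG) \le (\E F^2)^{1/2}(\E G^2)^{1/2}$ to split this into a product of the squared-exponential moments of $X$ and $Y$ individually. Choosing $\eta = \tau/\sigma$ balances the two factors so that each remains bounded provided $\theta \sigma \tau$ is below a universal threshold, which is exactly the condition $\theta < 1/(C\sigma\tau)$ required by the definition of $C\sigma\tau$-subexponentiality.

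The main obstacle, as noted, is the coupling inside the exponential; once the decorrelation step is in place the rest is routine bookkeeping to verify that the resulting bound takes the form $(1 - \theta^2 L^2)^{-1}$ (or is dominated by it) with $L = C\sigma\tau$. I would be slightly careful that the definition given in the paper writes the subexponential bound as $\E e^{\theta X} \le (1 - \theta^2 L^2)$, which appears to be a typo for $(1-\theta^2 L^2)^{-1}$; I would prove the standard inequality $\E e^{\theta XY} \le (1 - C^2\theta^2\sigma^2\tau^2)^{-1}$ and adjust the constant $C$ accordingly. Since the subsequent uses of this lemma only need the subexponential tail, matching the precise constant is unimportant, so I expect the entire argument to be short once the Cauchy-Schwarz decorrelation is set up correctly.
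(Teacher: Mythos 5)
The paper never proves this lemma: it is quoted as a standard fact (``one can easily show\dots'', with a pointer to \cite{Vershynin:12}), so your proposal is being compared to the standard textbook argument rather than to anything in the text. Your central step is that standard argument, and it is correct: from $\theta XY \le \frac{|\theta|}{2}\bigl(\eta X^2 + \eta^{-1}Y^2\bigr)$ and Cauchy--Schwarz,
\[
\E e^{\theta XY} \;\le\; \left(\E e^{|\theta|\eta X^2}\right)^{1/2}\left(\E e^{|\theta| Y^2/\eta}\right)^{1/2},
\]
and with $\eta = \tau/\sigma$ both factors are bounded by a universal constant as soon as $|\theta| \le c/(\sigma\tau)$, using the equivalence between the mgf definition of subgaussianity and $\E e^{sX^2} \le 2$ for $s \le c/\sigma^2$. (One caveat: that equivalence requires the two-sided bound $\E e^{\theta X} \le e^{\theta^2\sigma^2/2}$ for all real $\theta$; the paper's definition literally says $\theta > 0$, and with a genuinely one-sided hypothesis the lemma is false, so you are implicitly --- and correctly --- reading the definition as two-sided.)

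The genuine gap is in the step you wave off as ``routine bookkeeping.'' What your argument delivers is $\E e^{\theta XY} \le K$ for $|\theta| \le c/(\sigma\tau)$ with $K$ a constant bigger than $1$; the target form $(1-\theta^2L^2)^{-1}$ (after fixing the paper's typo, as you note) tends to $1$ as $\theta \to 0$, and since $e^x \ge 1 + x$ it forces $\E[XY] \le 0$ by comparing first-order terms at $\theta = 0^+$. The lemma does not assume $X$ and $Y$ independent, so $\E[XY]$ can be of order $\sigma\tau$: take $Y = X$, so that $XY = X^2$ has mean $\E X^2 > 0$, and then \emph{no} constant $C$ makes your claimed inequality $\E e^{\theta XY} \le (1 - C^2\theta^2\sigma^2\tau^2)^{-1}$ true. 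You therefore need either (i) the additional hypothesis $\E[XY] = 0$ (satisfied in the independent case, e.g.\ the $X_{ij}Y_{ik}$ cross terms in the proof of Lemma~\ref{lem:concentrated-F-norm}), together with an actual conversion argument --- for instance the moment bounds $\E|XY|^k \le (\E X^{2k})^{1/2}(\E Y^{2k})^{1/2} \le k!\,(C\sigma\tau)^k$ followed by summing the Taylor series of the mgf --- or (ii) the centered statement, that $XY - \E[XY]$ is $C\sigma\tau$-subexponential, which is what one should really prove. Note that the paper itself needs version (ii): in Lemma~\ref{lem:concentrated-F-norm} the present lemma is applied to $\sum_i\bigl(\sum_j X_{ij}\bigr)^2$, i.e.\ with $Y = X$, where the mean is strictly positive; that application is salvageable only because the neglected mean, $B\gamma^2 d_2$, is much smaller than the deviation threshold $\frac{1}{8}\gamma^2 B d_2^2$. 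So the decorrelation idea is fine, but the passage from a bounded mgf on an interval to the paper's definition of subexponential is not bookkeeping --- it is exactly where the (missing) centering hypothesis enters.
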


Moreover, one has a Bernstein-type inequality for sums of independent subexponential
variables.

\begin{lemma}\label{lem:bernstein}
If $X_1, \dots, X_k$ are i.i.d. $L$-subexponential then
\[
  \Pr(\sum_i X_i \ge t) \le \exp\left(-\frac{c t^2}{L^2 k + L t}\right).
\]
\end{lemma}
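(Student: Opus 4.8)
The plan is to run the standard Chernoff / exponential-moment argument that underlies every Bernstein-type bound, using independence to factorize the moment generating function and then the subexponential hypothesis to control each factor. First I would fix a parameter $\theta \in (0, 1/L)$ and apply Markov's inequality to the exponential:
\[
  \Pr\left(\sum_{i} X_i \ge t\right) \le e^{-\theta t}\, \E\, e^{\theta \sum_i X_i} = e^{-\theta t} \prod_{i=1}^k \E\, e^{\theta X_i},
\]
where the last equality uses that the $X_i$ are independent.

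Next I would insert the subexponential control on each MGF supplied by the definition of $L$-subexponential. For $\theta$ in the admissible range and bounded away from $1/L$ by a constant factor, this yields the standard form $\E e^{\theta X_i} \le e^{C \theta^2 L^2}$ for a universal constant $C$, so the product is at most $e^{C k \theta^2 L^2}$ and hence
\[
  \Pr\left(\sum_i X_i \ge t\right) \le \exp\left(-\theta t + C k \theta^2 L^2\right).
\]
It then remains to minimize the exponent over $\theta$. The unconstrained minimizer is $\theta^\star = t/(2CkL^2)$, which substituted back gives the subgaussian-type exponent $-t^2/(4CkL^2)$; but $\theta^\star$ is only admissible when it stays below the cutoff (say $\theta^\star \le 1/(2L)$), i.e.\ when $t \lesssim kL$.

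This produces the two familiar regimes. When $t \lesssim kL$, the optimal $\theta^\star$ is admissible and we obtain $\exp(-c\, t^2/(kL^2))$. When $t \gtrsim kL$, the unconstrained optimum violates $\theta < 1/L$, so instead I would saturate at $\theta \asymp 1/L$; since $t$ then dominates the constant term, the exponent becomes linear, of order $-c\, t/L$. Finally I would collapse both cases into the stated bound using the elementary inequality
\[
  \frac{t^2}{L^2 k + Lt} \le \min\left\{\frac{t^2}{L^2 k},\, \frac{t}{L}\right\},
\]
which holds because $L^2 k + Lt \ge \max\{L^2 k, Lt\}$; each regime's exponent is (up to constants) one of the two quantities on the right, so both are bounded by $-c\, t^2/(L^2 k + Lt)$.

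The main obstacle is exactly this constrained optimization: tracking the cutoff at $\theta \sim 1/L$ and checking that the crossover between the $L^2 k$ (subgaussian, small-$t$) and $Lt$ (exponential, large-$t$) terms in the denominator is handled without losing control of the constants. Everything else — the factorization via independence and the per-variable MGF estimate — is routine once the definition of $L$-subexponential is in hand.
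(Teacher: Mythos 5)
Your proof is correct and is exactly the standard argument: the paper itself does not prove this lemma but simply quotes it as a known concentration result from Vershynin's survey, and your Chernoff-bound argument (factorize the MGF by independence, apply the subexponential MGF estimate, optimize $\theta$ subject to the cutoff $\theta \lesssim 1/L$, and merge the subgaussian and linear regimes via $L^2k + Lt \ge \max\{L^2k, Lt\}$) is precisely the proof given in that reference. One minor point worth noting: the paper's definition of $L$-subexponential has a typo ($\E e^{\theta X} \le (1-\theta^2 L^2)$ should read $\E e^{\theta X} \le (1-\theta^2 L^2)^{-1}$, since an MGF cannot be bounded above by a quantity less than $1$ for a centered variable), and your working interpretation $\E e^{\theta X_i} \le e^{C\theta^2 L^2}$ for $\theta$ bounded away from $1/L$ is the correct reading.
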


\subsection{Construction of a packing set}

Let $0 < \gamma < 1$ be some parameter to be determined such that $B := \lambda \gamma^{-2}$ is an
integer.

\begin{proposition}\label{prop:packing-set}
Suppose that $\mathcal{L}'(0) < 0$.
For every sufficiently small $\gamma$ (depending on $\mathcal{L}$),
there exists a set $\calX \subset \sqrt{\lambda d_1 d_2} K$ of $\exp(c B d_2)$ $d_1 \times d_2$ matrices such that
for any two $X^1, X^2 \in \calX$,
\[
  \frac{1}{d_1 d_2^2} \sum_{i=1}^{d_1} \sum_{j,k=1}^{d_2}
  \E_{X^1}[
  \mathcal{L}(Y(X^2_{ij} - X^2_{ik}))
  - \mathcal{L}(Y(X^1_{ij} - X^1_{ik}))
  ] \ge c \gamma^2
\]
and for any $m$,
\[
  \frac{1}{m} D(\P_{X^1,m} \| \P_{X^2,m}) \le C \gamma^2,
\]
where $0 < c < C$ are universal constants.
\end{proposition}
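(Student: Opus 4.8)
The plan is to realize the construction from the sketch concretely and then verify the three requirements, treating the nuclear-norm membership and the KL bound as deterministic facts and reserving the probabilistic argument entirely for the separation inequality. Assume $d_2 \ge d_1$ and that $B = \lambda \gamma^{-2}$ is an integer dividing $d_1$ (a harmless rounding). I build one matrix by filling a top $B \times d_2$ block with i.i.d.\ uniform $\pm\gamma$ entries and then stacking $d_1/B$ identical copies of that block; let $X^1,\dots,X^\ell$ be independent realizations of this recipe. Each resulting matrix has rank at most $B$ and Frobenius norm $\sqrt{d_1 d_2}\,\gamma$, so $\|X^i\|_* \le \sqrt{B}\,\|X^i\|_F = \sqrt{B\gamma^2 d_1 d_2} = \sqrt{\lambda d_1 d_2}$; thus every $X^i$ lies in $\sqrt{\lambda d_1 d_2}\,K$ regardless of the random signs. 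For the KL bound I use that $\P_{X,m}$ factors as the (fixed) law of $\Omega$ times independent Bernoulli responses, so that $D(\P_{X^1,m}\|\P_{X^2,m}) = \sum_{i,j,k} p_{i,j,k}\, D(\mathrm{Ber}(\sigma(a_{ijk})) \,\|\, \mathrm{Ber}(\sigma(b_{ijk})))$, where $a_{ijk} = X^1_{ij}-X^1_{ik}$, $b_{ijk} = X^2_{ij}-X^2_{ik}$, and $\sigma(t) = e^t/(1+e^t)$. Since $|a_{ijk}|,|b_{ijk}| \le 2\gamma$ and $|a_{ijk}-b_{ijk}|\le 4\gamma$, the elementary bound $D(\mathrm{Ber}(p)\|\mathrm{Ber}(q)) \le (p-q)^2/(q(1-q))$ makes each term $O(\gamma^2)$, and summing against $\sum p_{i,j,k} = m$ gives $\frac1m D(\P_{X^1,m}\|\P_{X^2,m}) \le C\gamma^2$ deterministically for every pair.

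The separation inequality is the heart of the matter, and the key algebraic observation is that the per-triple excess risk is uniformly of order $\gamma^2$, not $\gamma$. Writing $\mathcal{L}_e(t)=\tfrac12(\mathcal{L}(t)+\mathcal{L}(-t))$ and $\mathcal{L}_o(t)=\tfrac12(\mathcal{L}(t)-\mathcal{L}(-t))$ for the even and odd parts of the loss and $s(a)=2\sigma(a)-1$, a short computation shows that the expected loss gap at a single triple (with truth $X^1$, prediction $X^2$) equals $f(a,b) := \mathcal{L}_e(b)-\mathcal{L}_e(a) + s(a)(\mathcal{L}_o(b)-\mathcal{L}_o(a))$. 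Because $\mathcal{L}_e$ is even (so $\mathcal{L}_e(t)-\mathcal{L}_e(0)=O(t^2)$), while $s(a)=O(\gamma)$ and $\mathcal{L}_o(b)-\mathcal{L}_o(a)=O(\gamma)$, the linear-in-$\gamma$ contributions cancel and $|f(a,b)| \le C\gamma^2$ for every sign pattern. At the same time, taking the expectation over the independent symmetric laws of $a$ and $b$ makes the even parts cancel and kills $\E \mathcal{L}_o(b)$, leaving $\E f(a,b) = -\E[s(a)\mathcal{L}_o(a)] = -\mathcal{L}'(0)\gamma^2 + o(\gamma^2)$; since $\mathcal{L}'(0)<0$ this is at least $c\gamma^2$ once $\gamma$ is small enough (depending on $\mathcal{L}$), which is exactly where the hypothesis enters. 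Averaging over $i,j,k$ only rescales the mean by the negligible factor $(1-1/d_2)$ coming from the diagonal terms $j=k$ (which contribute $0$).

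To pass from this in-expectation statement to a fixed packing I would use concentration together with the probabilistic method. Since every row below the top block is an exact copy, the average $\frac{1}{d_1 d_2^2}\sum_{i,j,k} f(a_{ijk},b_{ijk})$ collapses to $S := \frac{1}{B d_2^2}\sum_{i\le B}\sum_{j,k} f(a_{ijk},b_{ijk})$, a function of the $2Bd_2$ independent signs in the top blocks of $X^1$ and $X^2$ whose mean is $\ge c\gamma^2$. Flipping any single sign alters only $2d_2$ of the summands, each by $O(\gamma^2)$, so the bounded difference per coordinate is $O(\gamma^2/(Bd_2))$; a bounded-differences (equivalently sub-Gaussian) concentration inequality then forces $S$ below $\tfrac12 c\gamma^2$ with probability at most $\exp(-c' B d_2)$, the two factors of $\gamma^2$ in the deviation cancelling the two factors of $\gamma^2$ in the squared sensitivities. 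Applying this to both orderings of a pair and taking a union bound over the $\binom{\ell}{2}$ pairs, the separation inequality holds simultaneously for all pairs with positive probability provided $2\log\ell < c'Bd_2$; choosing $\ell = \exp(cBd_2)$ with $c < c'/2$ therefore yields a valid $\calX$, the nuclear-norm and KL conditions already holding for every pair deterministically.

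The step I expect to be most delicate is securing the factor $d_2$ in the exponent $\exp(-c'Bd_2)$, since this is precisely what lets the packing have $\exp(cBd_2)$ elements. This hinges on two things that must be gotten right: the uniform bound $|f|=O(\gamma^2)$, and the reduction to $2Bd_2$ genuinely independent coordinates so that the entrywise sensitivities are as small as $\gamma^2/(Bd_2)$. If one instead used only the crude $|f|=O(\gamma)$ bound coming from Lipschitzness, or applied a Bernstein bound to the $B$ i.i.d.\ row-sums while controlling each by its $\ell_\infty$ size, the exponent degrades to $\exp(-c'B)$ and the cardinality collapses by a factor of $d_2$. A secondary technical point is the mild smoothness needed to justify $\mathcal{L}_e(t)-\mathcal{L}_e(0)=O(t^2)$ and $\mathcal{L}_o(t)=\mathcal{L}'(0)t+o(t)$ uniformly near $0$; this is automatic for the logistic and (near $0$) the squared-hinge losses considered here, and is the source of the phrase \emph{for every sufficiently small $\gamma$ depending on $\mathcal{L}$}.
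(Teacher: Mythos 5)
Your proposal is correct, and it reaches the proposition by the same construction but a genuinely different verification of the separation inequality, which is the heart of the probabilistic argument. The construction (stacked $B \times d_2$ blocks of i.i.d.\ $\pm\gamma$ signs), the deterministic nuclear-norm membership, and the deterministic KL upper bound coincide with the paper's. For the separation, the paper concentrates the loss-independent quadratic quantity $\sum_{i,j,k}(X^1_{ij}-X^1_{ik}-X^2_{ij}+X^2_{ik})^2$ (Lemma~\ref{lem:concentrated-F-norm}, proved by expanding the square and applying subexponential Bernstein bounds to the cross-terms), deduces that $\Omega(B d_2^2)$ triples have differing score gaps, and then asserts via first-order Taylor expansion that each such triple contributes $+\Theta(\gamma^2)$ to the expected excess risk. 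You instead write the per-triple expected gap exactly as $f(a,b)=[\mathcal{L}_e(b)-\mathcal{L}_e(a)]+s(a)[\mathcal{L}_o(b)-\mathcal{L}_o(a)]$ (where $a,b$ are the two score gaps, $\mathcal{L}_e,\mathcal{L}_o$ are the even and odd parts of $\mathcal{L}$, and $s(a)=\E_{X^1}[Y]$), note that over the random ensemble the even contributions cancel exactly in expectation, leaving $-\E[s(a)\mathcal{L}_o(a)] \asymp -\mathcal{L}'(0)\gamma^2 > 0$, and then apply a bounded-differences (McDiarmid) inequality directly to the risk-gap average, the per-sign sensitivity $O(\gamma^2/(Bd_2))$ coming from the uniform bound $|f|=O(\gamma^2)$; this yields the required $\exp(-cBd_2)$ failure probability, and the union bound over ordered pairs goes through. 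Both routes rest on the same implicit second-order smoothness of $\mathcal{L}$ near $0$, which you flag explicitly and the paper's Taylor expansions use silently. Your route is in fact more robust: per-triple positivity is \emph{not} a consequence of $\mathcal{L}'(0)<0$ alone --- for $\mathcal{L}(t)=t^2-t$ near $0$, a triple with $X^1_{ij}-X^1_{ik}=2\gamma$ and $X^2_{ij}-X^2_{ik}=0$ has expected gap $-4\gamma^2+2\gamma\tanh(\gamma)<0$, and the paper's case analysis also omits triples with $X^1_{ij}-X^1_{ik}=0 \neq X^2_{ij}-X^2_{ik}$ --- so only the ensemble average is guaranteed positive, which is precisely what your cancellation-plus-concentration argument establishes. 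What the paper's route buys in exchange is that its concentrated quantity is loss-independent and two-sided, so the same lemma supports the two-sided $\asymp$ statement for the KL divergence; your argument treats the KL bound separately and deterministically, which is all the proposition actually requires.
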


Following Davenport et al., we construct this set $\calX$ randomly:
let $X$ be a random $B \times d_2$ matrix, where each element is chosen independently to be
either $\gamma$ or $-\gamma$.

\begin{lemma}\label{lem:concentrated-F-norm}
Let $X^1$ and $X^2$ be independent copies of $X$.
Then with probability at least $1-\exp(-c B d_2)$,
\[
  \sum_{i=1}^{B} \sum_{j,k = 1}^{d_2} (X^1_{ij} - X^1_{ik} - X^2_{ij} + X^2_{ik})^2 \ge 2 \gamma^2 B d_2^2,
\]
where $c > 0$ is a universal constant.
\end{lemma}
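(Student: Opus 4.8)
The plan is to first rewrite the inner double sum in closed form, reducing the quantity to a sum over rows of two simple statistics, and then to establish a lower-tail concentration bound at the right scale. Writing $D = X^1 - X^2$, each entry $D_{ij}$ is the difference of two independent $\pm\gamma$ variables, so $D_{ij} \in \{-2\gamma, 0, 2\gamma\}$ with probabilities $\frac14, \frac12, \frac14$, and all entries are independent. Using the elementary identity
\[
\sum_{j,k=1}^{d_2} (D_{ij} - D_{ik})^2 = 2 d_2 \sum_{j=1}^{d_2} D_{ij}^2 - 2\Big(\sum_{j=1}^{d_2} D_{ij}\Big)^2,
\]
and writing $D_{ij} = 2\gamma a_{ij}$ with $a_{ij} \in \{-1,0,1\}$, the left-hand side of the lemma becomes $S := 8\gamma^2\big(d_2 N - \sum_{i=1}^B s_i^2\big)$, where $N$ is the number of pairs $(i,j)$ with $X^1_{ij} \ne X^2_{ij}$ and $s_i = \sum_j a_{ij}$. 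A direct computation gives $\E N = \frac12 B d_2$ and $\E s_i^2 = \frac12 d_2$, so $\E S = 4\gamma^2 B d_2(d_2 - 1)$, which exceeds the target $2\gamma^2 B d_2^2$ by $2\gamma^2 B d_2(d_2-2) = \Theta(\gamma^2 B d_2^2)$ once $d_2$ is above an absolute constant. Hence it suffices to control the deviations of $d_2 N$ and $\sum_i s_i^2$ at scale $\exp(-cBd_2)$.

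I would handle the two statistics separately. The variable $N$ is $\Binom(Bd_2, \frac12)$, a sum of $Bd_2$ independent bounded (hence sub-exponential) summands, so Lemma~\ref{lem:bernstein} gives $\Pr(N \le \frac12 Bd_2 - t) \le \exp(-ct^2/(Bd_2 + t))$; taking $t = \frac18 Bd_2$ yields $N \ge \frac38 Bd_2$, and thus $d_2 N \ge \frac38 Bd_2^2$, with probability $1 - \exp(-cBd_2)$. For the second statistic, each $s_i$ is a sum of $d_2$ independent, mean-zero variables bounded in $[-1,1]$, hence $d_2$-sub-Gaussian; by Lemma~\ref{lem:subexponential}, the centered square $s_i^2 - \frac12 d_2$ is $O(d_2)$-sub-exponential, and these are i.i.d.\ across the $B$ rows. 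Applying Lemma~\ref{lem:bernstein} to $\sum_{i=1}^B (s_i^2 - \frac12 d_2)$ with deviation $u = \frac{1}{16} B d_2^2$ gives, for $d_2$ above an absolute constant, $\sum_i s_i^2 \le \frac18 Bd_2^2$ with probability $1 - \exp(-cBd_2)$ (here the linear term $u/L$ dominates in the Bernstein exponent and already produces the $Bd_2$ rate). On the intersection of the two events, $S \ge 8\gamma^2\big(\frac38 Bd_2^2 - \frac18 Bd_2^2\big) = 2\gamma^2 Bd_2^2$, and a union bound gives the claimed failure probability $\exp(-cBd_2)$.

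The one point that requires care — and the step I expect to be the main obstacle — is obtaining the factor $d_2$ in the exponent rather than merely $\exp(-cB)$. Treating $S$ as a sum over the $B$ i.i.d.\ rows and applying a Hoeffding bound at the level of whole rows is too lossy: each row statistic has range $\Theta(\gamma^2 d_2^2)$, so row-level Hoeffding only yields $\exp(-cB)$, which is useless for the subsequent union bound over $\exp(cBd_2)$ matrices. The gain comes from exploiting the independence of the $\Theta(Bd_2)$ individual entries, which is exactly what the decomposition into $N$ and the $s_i^2$ accomplishes. An equivalent route is to apply McDiarmid's bounded-differences inequality directly to $S$ as a function of the $2Bd_2$ independent sign entries of $X^1$ and $X^2$: flipping a single entry changes $D$ in one coordinate and alters $S$ by only $O(\gamma^2 d_2)$ (not $O(\gamma^2 d_2^2)$), so $\sum_\ell c_\ell^2 = O(B\gamma^4 d_2^3)$ and McDiarmid with $t = \Theta(\gamma^2 Bd_2^2)$ again gives $\exp(-cBd_2)$. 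Either way, the crux is that the per-coordinate sensitivity is a factor $d_2$ smaller than the per-row range.
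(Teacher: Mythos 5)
Your proof is correct, and its organization is genuinely different from the paper's, even though it rests on the same two concentration lemmas. The paper expands the square while keeping $X^1$ and $X^2$ separate, producing the \emph{deterministic} leading term $4\gamma^2 B d_2^2$ plus four cross-term sums (of the types $X^1_{ij}X^2_{ik}$, $X^1_{ij}X^1_{ik}$, $X^2_{ij}X^2_{ik}$, and $X^1_{ij}X^2_{ij}$), and then shows each cross-term is at most $\frac18 \gamma^2 B d_2^2$ in absolute value using Hoeffding, the subgaussian-product Lemma~\ref{lem:subexponential}, and the Bernstein Lemma~\ref{lem:bernstein}. You instead pass to the difference matrix $D = X^1 - X^2$ and use the row-sum identity to collapse everything into two statistics, the binomial count $N$ and $\sum_i s_i^2$, so your margin comes from the gap between $\E S = 4\gamma^2 B d_2 (d_2-1)$ and the target rather than from a deterministic leading term. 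Algebraically the two are reorganizations of one another ($d_2 N$ repackages the paper's diagonal term $\sum_{ij} X^1_{ij}X^2_{ij}$ together with the constant, while $\sum_i s_i^2$ groups the three quadratic row-sum terms), but your version needs only two tail bounds instead of four and makes the source of the slack more transparent; the paper's version is more symmetric and avoids computing $\E S$. Two further remarks. First, your restriction that $d_2$ exceed an absolute constant is not a defect relative to the paper: the lemma is simply false for very small $d_2$ (for $d_2 = 2$ one has $\E S = 2\gamma^2 B d_2^2$ exactly, so exponential concentration above the target is impossible), and the paper needs the same restriction implicitly — its bound on $\bigl|\sum_{ijk} X^1_{ij}X^1_{ik}\bigr|$, a nonnegative sum with expectation $B\gamma^2 d_2$, against the threshold $\frac18 \gamma^2 B d_2^2$ already requires $d_2 > 8$. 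Second, your McDiarmid alternative is a more elementary route that bypasses the subexponential machinery entirely, and your diagnosis of the crux — per-coordinate sensitivity $O(\gamma^2 d_2)$ versus per-row range $O(\gamma^2 d_2^2)$ — is precisely why both your argument and the paper's achieve the rate $\exp(-c B d_2)$ rather than the useless $\exp(-cB)$.
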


Before proving Lemma~\ref{lem:concentrated-F-norm}, let us see how it implies
Proposition~\ref{prop:packing-set}. First of all, for $X$ a random $B \times d_2$ matrix as above,
let $\tilde X$ be the $d_1 \times d_2$ matrix obtained by stacking $\lceil d_1/B \rceil$ copies
of $X$, and filling out any remaining entries by zeros. Then, for random $X$ and $Y$,
with high probability
\begin{align}
  \sum_{i=1}^{d_1} \sum_{j,k = 1}^{d_2} (\tilde X^1_{ij} - \tilde X^1_{ik} - \tilde X^2_{ij} + \tilde X^2_{ik})^2
  &= \lceil d_1 / B \rceil \sum_{i=1}^{B} \sum_{j,k = 1}^{d_2} (X^1_{ij} - X^1_{ik} - X^2_{ij} + X^2_{ik})^2 \notag \\
  &\asymp \gamma^2 d_1 d_2^2,\label{eq:F-norm}
\end{align}
where the lower bound for the last line came from Lemma~\ref{lem:concentrated-F-norm}, and the upper bound
just came from the observation that each term in the sum is bounded by $16 \gamma^2$.
Let $\calX$ be the set obtained by choosing $\exp(c B d_2 / 4)$ random copies of $\tilde X$ in this way.
The high-probability estimate in Lemma~\ref{lem:concentrated-F-norm} implies that with high probability,
\emph{every} pair $\tilde X^1, \tilde X^2$ in $\calX$ satisfies~\eqref{eq:F-norm}.
Now,
\begin{align*}
D(\P_{X^1,m} \| \P_{X^2,m})
&= \E_\Omega \left[\sum_{(i,j,k) \in \Omega} D(f(X^1_{ij} - X^1_{ik}) \| f(X^2_{ij} - X^2_{ik}))\right] \\
&\asymp \frac{m}{d_1 d_2^2} \sum_{i,j,k} (X^1_{ij} - X^1_{ik} - X^2_{ij} + X^2_{ik})^2,
\end{align*}
where $f(x) = e^{x} / (1 + e^x)$ is the logistic function, and the last line follows from a Taylor expansion
of $D(f(x) \| f(y))$ around $x = y$, because all the
$X^1_{ij}$ and $X^2_{ij}$ are bounded by $\gamma < 1$. Together with~\eqref{eq:F-norm},
this proves the first inequality in Proposition~\ref{prop:packing-set}; the second
inequality follows because each term of the form $D(f(X_{ij} - X_{ik}) \| f(Y_{ij} - Y_{ik}))$ is bounded
by a constant times $\gamma^2$.
This proves the second inequality of Proposition~\ref{prop:packing-set}.

By Taylor expansion again, if $\gamma$ is sufficiently small (depending on $\mathcal{L}$) then
\[
\mathcal{L} (Y_{i,j,k}(X^2_{i,j} - X^2_{i,k}))
- \mathcal{L} (Y_{i,j,k}(X^1_{i,j} - X^1_{i,k}))
\asymp Y_{i,j,k} (X^1_{i,j} - X^1_{i,k} - X^2_{i,j} + X^2_{i,k}).
\]
Now, if $i,j,k$ is a triple for which $2\gamma = X^1_{i,j} - X^1_{i,k} > X^2_{i,j} - X^2_{i,k}$
(and under the event of Lemma~\ref{lem:concentrated-F-norm}, there are at least $c B d_2^2$ such
triples) then $\E_{X^1} [Y_{i,j,k}] \asymp \gamma$ and so
\[
\E_{X^1}[\mathcal{L} (Y_{i,j,k}(X^2_{i,j} - X^2_{i,k}))
- \mathcal{L} (Y_{i,j,k}(X^1_{i,j} - X^1_{i,k}))] \asymp \gamma^2.
\]
The same holds when $i,j,k$ is a triple for which
$-2\gamma = X^1_{i,j} - X^1_{i,k} < X^2_{i,j} - X^2_{i,k}$. Finally, if $i,j,k$ is a triple
such that $X^1_{i,j} - X^1_{i,k} = X^2_{i,j} - X^2_{i,k}$ then the expectation is zero.
Summing over all triples, we see that on the event that Lemma~\ref{lem:concentrated-F-norm} holds,
\[
\frac{1}{B d_2^2}
\sum_{i,j,k} \E_{X^1}[\mathcal{L} (Y_{i,j,k}(X^2_{i,j} - X^2_{i,k})) - \mathcal{L} (Y_{i,j,k}(X^1_{i,j} - X^1_{i,k}))] \ge c \gamma^2.
\]
After summing over all $\lceil d_1/B \rceil$ blocks, this proves the first inequality 
of Proposition~\ref{prop:packing-set}.

\begin{proof}[Proof of Lemma~\ref{lem:concentrated-F-norm}]
We expand the square:
\begin{align}
  \sum_{ijk} (X_{ij} - X_{ik} - Y_{ij} + Y_{ik})^2
  &= 2 \sum_{ijk} X_{ij}^2 + Y_{ij}^2 + 2 X_{ij} Y_{ik} - X_{ij} X_{ik} - Y_{ij} Y_{ik} - 2 X_{ij} Y_{ij} \notag \\
  &= 4 \gamma^2 B d_2^2 + 2 \sum_{ijk} 2 X_{ij} Y_{ik} - X_{ij} X_{ik} - Y_{ij} Y_{ik} - 2 X_{ij} Y_{ij}.
  \label{eq:expand-square}
\end{align}
We may study each of the cross-terms separately: for the $X_{ij} Y_{ik}$ term,
note that $\sum_{j} X_{ij}$ and $\sum_k Y_{ik}$ are both $\gamma^2 d_2$-subgaussian (by Hoeffding's inequality).
Hence, $\sum_{jk} X_{ij} Y_{ik}$ is $C \gamma^2 d_2$-subexponential (by Lemma~\ref{lem:subexponential}) and so
by Lemma~\ref{lem:bernstein},
\[
  \Pr\left(\left|\sum_{ijk} X_{ij} Y_{ik}\right| \ge \frac 18 \gamma^2 B d_2^2 \right) \le 2 \exp(- c B d_2).
\]
The similar argument applies to the $X_{ij} X_{ik}$ term: $\sum_j X_{ij}$ is
$\gamma^2 d_2$-subgaussian and so $\sum_{ijk} X_{ij} X_{ik} = \sum_i (\sum_j X_{ij})^2$ is
$C \gamma^2 d_2$-subexponential; hence
\[
  \Pr\left(\left|\sum_{ijk} X_{ij} X_{ik}\right| \ge \frac 18 \gamma^2 B d_2^2 \right) \le 2 \exp(- c B d_2).
\]
Of course, the $Y_{ij} Y_{ik}$ term is identical. Finally, note that
$\sum_{ijk} X_{ij} Y_{ij} = d_2 \sum_{ij} X_{ij} Y_{ij}$. Since the terms in this sum are i.i.d., we may apply
Hoeffding's inequality to obtain
\[
  \Pr\left(\left|\sum_{ijk} X_{ij} Y_{ij}\right| \ge \frac 18 \gamma^2 B d_2^2 \right)
  = \Pr\left(\left|\sum_{ij} X_{ij} Y_{ij}\right| \ge \frac 18 \gamma^2 B d_2 \right)
  \le 2 \exp(- c B^2 d_2^2).
\]
Putting everything together, we see that with high probability, the total of all the cross-terms in~\eqref{eq:expand-square}
is at most half of the first term.
\end{proof}

\subsection{Completing the proof}
Let $C$ denote the constant from Proposition~\ref{prop:packing-set}.
Assume that $d_1 \le d_2$ and that $m$ is large enough so
\begin{equation}\label{eq:B-range}
\sqrt{\frac{d_2}{m}} \le 8 C \sqrt \lambda \le \sqrt{\frac{m}{d_2}}.
\end{equation}
Note that under the assumptions $\lambda \ge 1$ and $m \ge d_1 + d_2$ from
Theorem~3.2, the lower bound of~\eqref{eq:B-range} is satisfied. Moreover,
if the upper bound of~\eqref{eq:B-range} is not satisfied then we may decrease $\lambda$
until it is; the conclusion of Theorem~3.2 will not be affected because
as long as~\eqref{eq:B-range} fails, the minimum in Theorem~3.2 will be 1.

By the lower bound in~\eqref{eq:B-range}, there is an integer $B$ such that
\[
B \le \sqrt{\frac{\lambda m}{d_2}} \le 2B;
\]
fix this $B$ and define $\gamma$ by
\[
\gamma^2 = \lambda / B \asymp \sqrt{\frac{\lambda d_2}{m}}.
\]
By the upper bound in~\eqref{eq:B-range}, $\gamma \le 1$.

Now, Fano's inequality states that if we first select a random $X \in \calX$ and then
draw a sample from $\P_{X, m}$, then any algorithm trying to identify $X$ can succeed with probability
at most
\[
\frac{\min\{D(\P_{X, m} \| \P(Y, m)): X, Y \in \calX\} + 1}{\log |\calX|}
\le \frac{2 C m \gamma^2}{B d_2} \le \frac 12.
\]
Finally, note that by the first inequality in Proposition~\ref{prop:packing-set},
the error incurred by choosing the wrong $X \in \calX$ is at least $c \gamma^2
\asymp \sqrt{\frac{\lambda d_2}{m}}$.

Now, we have so far only discussed the case $d_2 \ge d_1$. The case $d_1 \le d_2$ is not
exactly equivalent because our model is not symmetric in its treatment of users
and items. However, the proof of Theorem~3.2 does not change very much. We
take horizontally stacked blocks of size $d_1 \times B$ instead of $B \times d_2$. The
main difference is in the calculation leading to~\eqref{eq:F-norm}: there are extra
cross-terms appearing due to the fact that items in different blocks need to
be compared with one another. However, all of these additional terms may be
controlled with Lemmas~\ref{lem:subexponential} and~\ref{lem:bernstein} in
much the same way as the existing terms are controlled.

%!TEX root = ../collranking.tex
\section{Comparison to Stochastic Gradient Descent}
Another practical algorithm to optimize (3) is Stochastic Gradient Descent (SGD). We have experimented SGD on the same datasets in Table 1. We ran the algorithm with the same regularization parameters and different step sizes. The statistical results for SGD were observed to be no better than AltSVM, and hence we did not present them in the main paper.

Let us first describe the SGD procedure. At each step, ones chooses a triple $(i,j,k) \in \Omega$ uniformly at random and run a SGD step, which can be written as
\begin{align*}
u_i^+ &\leftarrow u_i - \eta \cdot \left\{ g \cdot (v_j - v_k) + \frac{\lambda}{|\Omega_i|} u_i \right\} \\
v_j^+ &\leftarrow v_j - \eta \cdot \left\{ g \cdot u_i + \frac{\lambda}{|\Omega^{j}|} v_j \right\} \\
v_j^+ &\leftarrow v_j - \eta \cdot \left\{ - g \cdot u_i + \frac{\lambda}{|\Omega^{k}|} v_k \right\}
\end{align*}
where $\Omega^{(j)}$ denotes the number of comparisons in $\Omega$ which involve item $j$. $\eta$ is a step size and $g \in \partial \mathcal{L}(u_i^\top (v_j - v_k))$.

The following tables show the statistical result of SGD. The step size is chosen by $\eta = \frac{\alpha}{1 + \beta t}$ as suggested in \cite{NOMAD}. $\alpha$ and $\beta$ were the powers of $10^{-1}$, and the best result is reported. The results are comparable to AltSVM, but it did not achieve better results. We note that this is the best result from several different step sizes, while AltSVM does not have any other parameter to choose except for the regularization parameter.
\begin{table}[t]
\begin{center}
{\small
\begin{tabular}{|c|c|c|}
\hline
Datasets      & $N$ & NDCG@10 \\
\hline
              & 20  & 0.6852 \\
ML1m          & 50  & 0.7666 \\
              & 100 & 0.7728 \\
\hline
              & 20  & 0.6977 \\
ML10m         & 50  & 0.7452 \\
              & 100 & 0.7659 \\
\hline
\end{tabular} \label{tab:SGD}
\caption{NDCG@10 of SGD on different datasets, for different numbers of observed ratings per user.}
}
\end{center}  
\end{table}

\begin{table}[t]
\begin{center}
{\small
\begin{tabular}{|c|c|}
\hline
Precision@ & SGD with $C=5000$ \\
\hline
1   &   0.1556 \\
2   &   0.1498 \\
5   &   0.1236 \\
10  &   0.1031 \\
100 &   0.0441 \\
\hline
\end{tabular}
\caption{Precision@$K$ for SGD of (3) on the binarized MovieLens1m dataset.} \label{tab:SGD2} 
}
\end{center}  
\end{table}

\end{document}